\newcommand{\cmark}{\ding{51}}%
\newcommand{\xmark}{\ding{55}}%
\definecolor{mygray}{RGB}{200, 200, 200}
\definecolor{mygreen}{RGB}{1, 200, 1}
\definecolor{myblue}{RGB}{1, 1, 255}
\definecolor{mypurple}{RGB}{200, 1, 200}
\definecolor{lightgray}{gray}{0.8}
\definecolor{lightlightgray}{gray}{0.9}
\DeclareMathOperator*{\argmax}{arg\,max}
\DeclareMathOperator*{\argmin}{arg\,min}
\newtheorem{theorem}{Theorem}
\newtheorem{proposition}{Proposition}
\theoremstyle{definition}
\newtheorem{definition}{Definition}
\DeclareUrlCommand\ULurl@@{%
  \def\UrlLeft{\bgroup}%
  \def\UrlRight{\egroup}}
\def\ULurl@#1{\hyper@linkurl{\ULurl@@{#1}}{#1}}
\DeclareRobustCommand*\ULurl{\hyper@normalise\ULurl@}
\begin{document}
\title{Stable Motion Primitives via Imitation and Contrastive Learning}

\author{Rodrigo~P\'erez-Dattari,~\IEEEmembership{Member,~IEEE,}
        Jens~Kober,~\IEEEmembership{Senior~Member,~IEEE}
}


\maketitle

\begin{abstract}
Learning from humans allows non-experts to program robots with ease, lowering the resources required to build complex robotic solutions. Nevertheless, such data-driven approaches often lack the ability to provide guarantees regarding their learned behaviors, which is critical for avoiding failures and/or accidents.
In this work, we focus on reaching/point-to-point motions, where robots must always reach their goal, independently of their initial state. This can be achieved by modeling motions as dynamical systems and ensuring that they are globally asymptotically stable. 
Hence, we introduce a novel Contrastive Learning loss for training Deep Neural Networks (DNN) that, when used together with an Imitation Learning loss, enforces the aforementioned stability in the learned motions. Differently from previous work, our method does not restrict the structure of its function approximator, enabling its use with arbitrary DNNs and allowing it to learn complex motions with high accuracy. 
We validate it using datasets and a real robot. In the former case, motions are 2 and 4 dimensional, modeled as first- and second-order dynamical systems. In the latter, motions are 3, 4, and 6 dimensional, of first and second order, and are used to control a 7DoF robot manipulator in its end effector space and joint space. More details regarding the real-world experiments are presented in: \url{https://youtu.be/OM-2edHBRfc}.
\end{abstract}

\begin{IEEEkeywords}
Imitation Learning, Contrastive Learning, Dynamical Systems, Motion Primitives, Deep Neural Networks 
\end{IEEEkeywords}

\IEEEpeerreviewmaketitle

\section{Introduction}
Imitation Learning (IL) provides a framework that is intuitive for humans to use, without requiring them to be robotics experts. It allows robots to be programmed by employing methods similar to the ones humans use to learn from each other, such as demonstrations, corrections, and evaluations. This significantly reduces the resources needed for building robotic systems, making it particularly appealing for real-world applications (e.g., Fig. \ref{fig:cover}).

Nevertheless, due to their data-driven nature, IL methods often lack guarantees, such as ensuring that a robot's motion always reaches its target, independently of its initial state (e.g., Fig. \ref{fig:bc_example_DS}). This can be a major limitation for implementing methods in the real world since it can lead to failures and/or accidents.

To tackle these challenges, we can model motions as dynamical systems whose evolution describes a set of human demonstrations \cite{ijspeert2013dynamical, khansari2011learning, rana2020euclideanizing}. This is advantageous because 1) the model depends on the robot’s state and it is learned offline, enabling the robot to adapt to changes in the environment during task execution, and 2) dynamical systems theory can be employed to analyze the behavior of the motion and provide guarantees.

\begin{figure}[t]
    \centering
    \includegraphics[width=0.8\columnwidth]{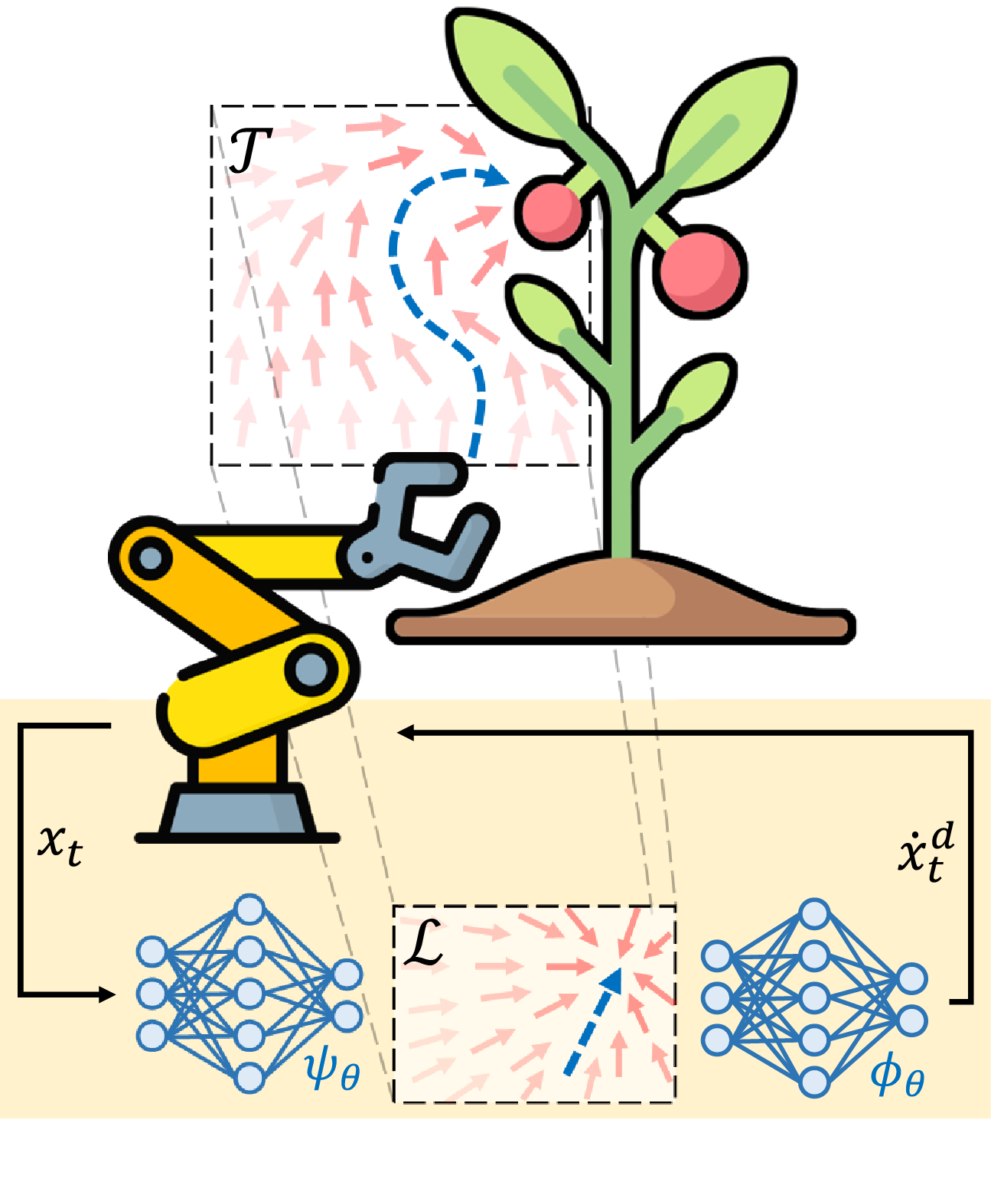}
    \caption{Overview of a motion model learned using the proposed framework. The blue trajectory in the task space $\mathcal{T}$, shows the movement of the robot's end effector when starting from its current state $x_{t}$. The evolution of this trajectory is defined by the dynamical system $\dot{x}^{d}_{t}=\phi_{\theta}(\psi_{\theta}(x_{t}))$, which is represented with a vector field of red arrows in the rest of the space. Using Contrastive Learning, this system is coupled with a well-understood and stable dynamical system in the latent space $\mathcal{L}$ that ensures its stability.  }
    \label{fig:cover}
\end{figure}

In this work, we focus on learning dynamical systems from demonstrations to model \emph{reaching motions}, as a wide range of tasks requires robots to reach goals, e.g., hanging objects, pick-and-place of products, crop harvesting, and button pressing. Furthermore, these motions can be sequenced to model cyclical behaviors, extending their use for such problems as well \cite{kober2012playing}.

A reaching motion modeled as a dynamical system is considered to be globally asymptotically stable if the robot always reaches its goal, independently of its initial state. In this work, we will refer to such systems as \emph{stable} for short. Notably, by employing dynamical systems theory, stability in reaching motions can be enforced when learning from demonstrations, providing guarantees to these learning frameworks.

In the literature, there is a family of works that use this approach to learn stable motions from demonstrations \cite{khansari2011learning, rana2020euclideanizing, sindhwani2018learning}. However, these often constrain the structure of their learning models to meet certain conditions needed to guarantee the stability of their motions, e.g., by enforcing the learning functions to be invertible \cite{perrin2016fast,rana2020euclideanizing,urain2020imitationflow} or positive/negative definite \cite{khansari2011learning,khansari2014learning,lemme2014neural}. Although these constraints ensure stability, they limit the applicability of the methods to a narrow range of models. For example, if a novel promising Deep Neural Network (DNN) architecture is introduced in the literature, it would not be straightforward/possible to use it in such frameworks, since, commonly, DNN architectures do not have this type of constraints. Furthermore, the learning flexibility of a function approximator is limited if its structure is restricted, which can hurt its accuracy performance when learning motions.

\begin{figure}[t]
    \centering
    \includegraphics[width=0.8\columnwidth]{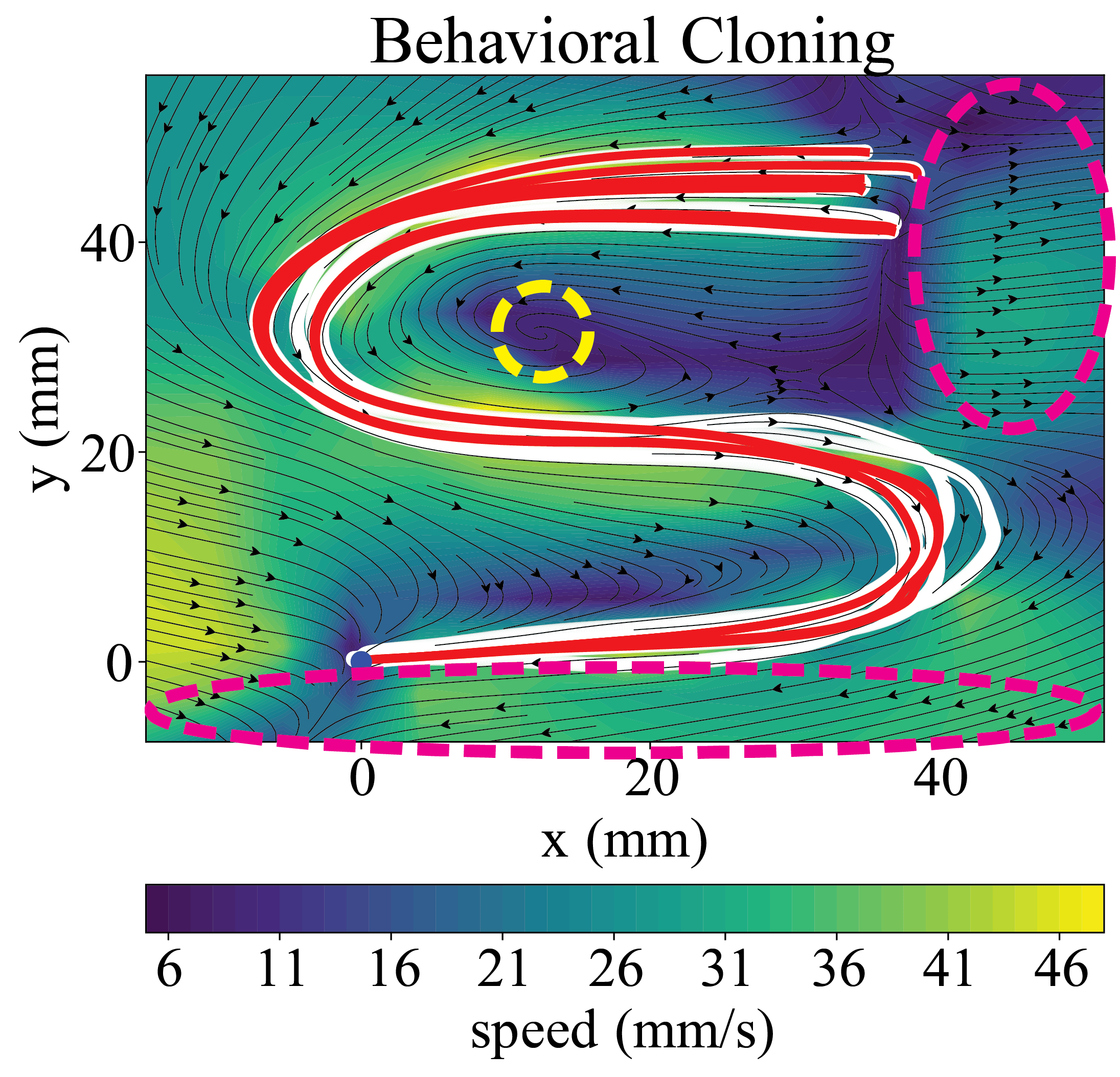}
    \caption{Example of a motion learned using Behavioral Cloning. White curves represent demonstrations. Red curves represent learned motions when starting from the same initial positions as the demonstrations. The arrows indicate the vector field of the learned dynamical system. The yellow dotted line shows a region with a spurious attractor. The magenta lines show regions where the trajectories diverge away from the goal.}
    \label{fig:bc_example_DS}
\end{figure}

Hence, in the context of DNNs\footnote{We understand DNNs as a collection of machine learning algorithms that learn in a hierarchical manner, i.e., the function approximator consists of a composition of multiple functions, and are optimized by means of backpropagation \cite{awad2015efficient}.}, we propose a novel method for learning stable motions without constraining the structure of the function approximator. To achieve this, we introduce a contrastive loss \cite{chopra2005learning,kaya2019deep} to enforce stability in dynamical systems modeled with arbitrary DNN architectures. This is achieved by transferring the stability properties of a simple, stable dynamical system to the more complex system that models the demonstrations (see Figure \ref{fig:cover}). To the best of our knowledge, this is the first approach that learns to generate stable motions with DNNs without relying on a specific architecture type.

We validate our method using both simulated and real-world experiments, demonstrating its ability to successfully scale in terms of the order and dimensionality of the dynamical system. Furthermore, we showcase its capabilities for controlling a 7DoF robotic manipulator in both joint and end-effector space. Lastly, we explore potential extensions for the method, such as combining motions by learning multiple systems within a single DNN architecture.

The paper is organized as follows: related works are presented in Section \ref{sec:related_works}. Section \ref{sec:preliminaries} describes the background and problem formulation of our method. Section \ref{sec:method} develops the theory required to introduce the contrastive loss, introduces it, and explains how we employ it in the context of Imitation Learning. Experiments and results are divided into Sections \ref{sec:simulated_experiments}, \ref{sec:real_world}, and \ref{sec:extensions}. Section \ref{sec:simulated_experiments} validates our method using datasets of motions modeled as first-order and second-order dynamical systems. These motions are learned from real data, but they are evaluated without employing a real system. Section \ref{sec:real_world} validates the method in a real robot, and Section \ref{sec:extensions} studies possible ways of extending it. Finally, the conclusions are drawn in Section \ref{sec:conclusions}.
\section{Related Works}\label{sec:related_works}
Several works have approached the problem of learning motions modeled as dynamical systems from demonstrations while ensuring their stability. By observing if these works employ either \emph{time-varying} or \emph{time-invariant} dynamical systems, we can divide them into two groups. In time-varying dynamical systems, the evolution of the system explicitly depends on time (or a phase). In contrast, time-invariant dynamical systems do not depend on time \emph{directly}, but only through its time-varying input (i.e., the state of the system). The property of a system being either time-invariant or not, conditions the type of strategies that can be employed to enforce its stability. Hence, for this work, it makes sense to make a distinction between these systems. 

One seminal work of IL that addresses stability for time-varying dynamical systems introduces Dynamical Movement Primitives (DMPs) \cite{ijspeert2013dynamical}. This method takes advantage of the time-dependency (via the phase of the \emph{canonical system}) of the dynamical system to enforce its nonlinear part, which captures the behavior of the demonstration, to vanish as time goes to infinity. Then, they build the remainder of the system to be a function that is well-understood and stable by construction. Hence, since the nonlinear part of the motion will eventually vanish, its stability can be guaranteed. In the literature, some works extend this idea with probabilistic formulations \cite{li2022prodmps,amor2014interaction}, and others have extended its use to the context of DNNs \cite{pervez2017learning,ridge2020training,bahl2020neural}. 

These time-varying dynamical system approaches are well-suited for when the target motions have clear temporal dependencies. However, they can generate undesired behaviors when encountering perturbations (assuming the time/phase is not explicitly modulated), and they lack the ability to model different behaviors for different regions of the robot's state space. In contrast, time-invariant dynamical systems can easily address these shortcomings, but they can be more challenging to employ when motions contain strong temporal dependencies. Therefore, IL formulations with such systems are considered to be complementary to the ones that employ time-varying systems \cite{calinon2011encoding,ravichandar2020recent}. In this work, we focus on time-invariant dynamical systems.

An important family of works has addressed the problem of modeling stable motions as time-invariant dynamical systems. These approaches often constrain the structure of the dynamical systems to ensure Lyapunov stability by design. In this context, one seminal work introduces the Stable Estimator of Dynamical Systems (SEDS) \cite{khansari2011learning}. This approach imposes constraints on the structure of Gaussian Mixture Regressions (GMR), ensuring stability in the generated motions.

Later, this idea inspired other works to explicitly learn Lyapunov functions that are consistent with the demonstrations and \emph{correct} the transitions of the learned dynamical system such that they are stable according to the learned Lyapunov function \cite{khansari2014learning,lemme2014neural,duan2017fast}. Furthermore, several extensions of SEDS have been proposed, for instance by using physically-consistent priors \cite{figueroa2018physically}, contraction theory \cite{ravichandar2017learning} or diffeomorphisms \cite{neumann2015learning}. 

Moreover, some of these ideas, such as the use of contraction theory or diffeomorphisms have also been used outside the scope of SEDS. Contraction theory ensures stability by enforcing the distance between the trajectories of a system to reduce, according to a given metric, as the system evolves. Hence, it has been employed to learn stable motions from demonstrations \cite{ravichandar2015learning,blocher2017learning}. In contrast, diffeomorphisms can be employed to transfer the stability properties of a stable and well-understood system, to a complex nonlinear system that models the behavior of the demonstrations. Hence, this strategy has also been employed to learn stable motions from demonstrations \cite{perrin2016fast,rana2020euclideanizing,urain2020imitationflow}. As we explain in Section \ref{sec:connection_diffeo}, our method is closely related to these approaches. It is worth noting that of the mentioned strategies, only \cite{urain2020imitationflow} models stable stochastic dynamics. However, this concept could also be explored with other encoder-decoder stochastic models, e.g., \cite{li2022prodmps}.

Understandably, all of these methods constrain some part of their learning framework to ensure stability. From one point of view, this is advantageous, since they can guarantee stability. However, in many cases, this comes with the cost of reducing the flexibility of the learned motions (i.e., loss in accuracy). Notably, some recent methods have managed to reduce this loss in accuracy \cite{rana2020euclideanizing,urain2020imitationflow}; however, they are still limited in terms of the family of models that can be used with these frameworks, which harms their scalability. Consequently, in this work, we address these limitations by enforcing the stability of the learned motions as a soft constraint and showing its effectiveness in obtaining stable, accurate, and scalable motions.
\section{Preliminaries}
\label{sec:preliminaries}
\subsection{Dynamical Systems as Movement Primitives}
In this work, we model motions as nonlinear time-invariant dynamical systems defined by the equation 
\begin{equation}
    \dot{x} = f(x),
\label{DS1}
\end{equation}
where $x \in \mathbb{R}^{n}$ is the system's state and $f: \mathbb{R}^{n} \to \mathbb{R}^{n}$ is a nonlinear continuous and continuously differentiable function. The evolution defined by this dynamical system is transferred to the robot's state by tracking it with a lower-level controller.

\subsection{Global Asymptotic Stability}
We are interested in solving reaching tasks. From a dynamical system perspective, this means that we want to construct a system where the goal state $x_{g} \in \mathbb{R}^{n}$ is a \emph{globally asymptotically stable} equilibrium. An equilibrium $x_{g}$ is globally asymptotically stable if $ \forall x \in \mathbb{R}^{n}$, 
\begin{equation}\label{eq:stability}
    \lim_{t \rightarrow \infty} || x - x_{g}||=0.
\end{equation}
Note that for this condition to be true, the time derivative of the dynamical system at the attractor must be zero, i.e., $\dot{x}=f(x_{g})=0$. 

For simplicity, we use the word \emph{stable} to refer to these systems.

\subsection{Problem Formulation}
Consider the scenario where a robot aims to learn a reaching motion, in a given space $\mathcal{T} \subset \mathbb{R}^{n}$ and with respect to a given goal $x_{g} \in \mathcal{T}$, based on a set of demonstrations $\mathcal{D}$. The robot is expected to imitate the behavior shown in the demonstrations while always reaching $x_{g}$, regardless of its initial state.

The dataset $\mathcal{D}$ contains $N$ demonstrations in the form of trajectories $\tau_{i}$, such that $\mathcal{D} = (\tau_{0}, \tau_{1}, ..., \tau_{N-1})$. Each one of these trajectories contains the evolution of a dynamical system with discrete-time states $x_{t} \in \mathcal{T}$ when starting from an initial state $x_{0}$ and it transitions for $T$ time steps $t$ of size $\Delta t$. Hence, $\tau_{i} = (x_{0}^{i}, x_{1}^{i}, ..., x_{T-1}^{i})$, where $T$ does not have to be the same for every demonstration, and here we added the superscript $i$ to the states to explicitly indicate that they belong to the trajectory $\tau_{i}$. Note, however, that the state superscript will not be used for the remainder of the paper.

We assume that these trajectories are drawn from the distribution $p^{*}(\tau)$, where every transition belonging to a trajectory sampled from this distribution follows the \emph{optimal} (according to the demonstrator's judgment) dynamical system $f^{*}$. On the other hand, the robot's motion is modeled as the parametrized dynamical system $f^{\mathcal{T}}_{\theta}$, which induces the trajectory distribution $p_{\theta}(\tau)$, where $\theta$ is the parameter vector.

Then, the objective is to find $\theta^{*}$ such that the distance between the trajectory distributions induced by the human and learned dynamical system is minimized while ensuring the stability of the motions generated with $f^{\mathcal{T}}_{\theta}$ towards $x_{g}$.
This can be formulated as the minimization of the (forward) Kullback-Leibler divergence between these distributions \cite{bishop2006pattern}, subject to a stability constraint of the learned system:
\begin{subequations}\label{eqn:problem_formulation}
\begin{align}
\theta^* = \argmin_{\theta} \quad & D_{\text{KL}}\left(p^{*}(\tau) ||  p_{\theta}(\tau)\right)\\ 
\text { s.t. } &\lim_{t \rightarrow \infty} || x_{t} - x_{g}||=0,\\
 &\forall x_{t} \in \mathcal{T} \text{ evolving with } f^{\mathcal{T}}_{\theta}. \nonumber
\label{eq:dynamic_constraints}
\end{align}
\end{subequations}
\section{Methodology}
\label{sec:method}
We aim to learn motions from demonstrations modeled as nonlinear time-invariant dynamical systems. 
In this context, we present the \emph{CONvergent Dynamics from demOnstRations} (CONDOR) framework. This framework learns the parametrized function $f^{\mathcal{T}}_{\theta}$ using human demonstrations and ensures that this dynamical system has a globally asymptotically stable equilibrium at $x_{g}$ while being accurate w.r.t. the demonstrations. 

To achieve this, we extend the Imitation Learning (IL) problem with a novel loss $\ell_{\text{stable}}$ based on Contrastive Learning (CL) \cite{chopra2005learning} that aims to ensure the stability of the learned system. Hence, if the IL problem minimizes the loss $\ell_{\text{IL}}$, our framework minimizes 
\begin{equation}
    \ell_{\text{CIL}} = \ell_{\text{IL}} + \lambda \ell_{\text{stable}},
\end{equation}
where $\lambda \in \mathbb{R}$ is a weight. We refer to $\ell_{\text{CIL}}$ as the \emph{Contrastive Imitation Learning} (CIL) loss.

\subsection{Structure of CONDOR}\label{sec:condor_structure}
\begin{figure}[t]
    \centering
    \includegraphics[width=\columnwidth]{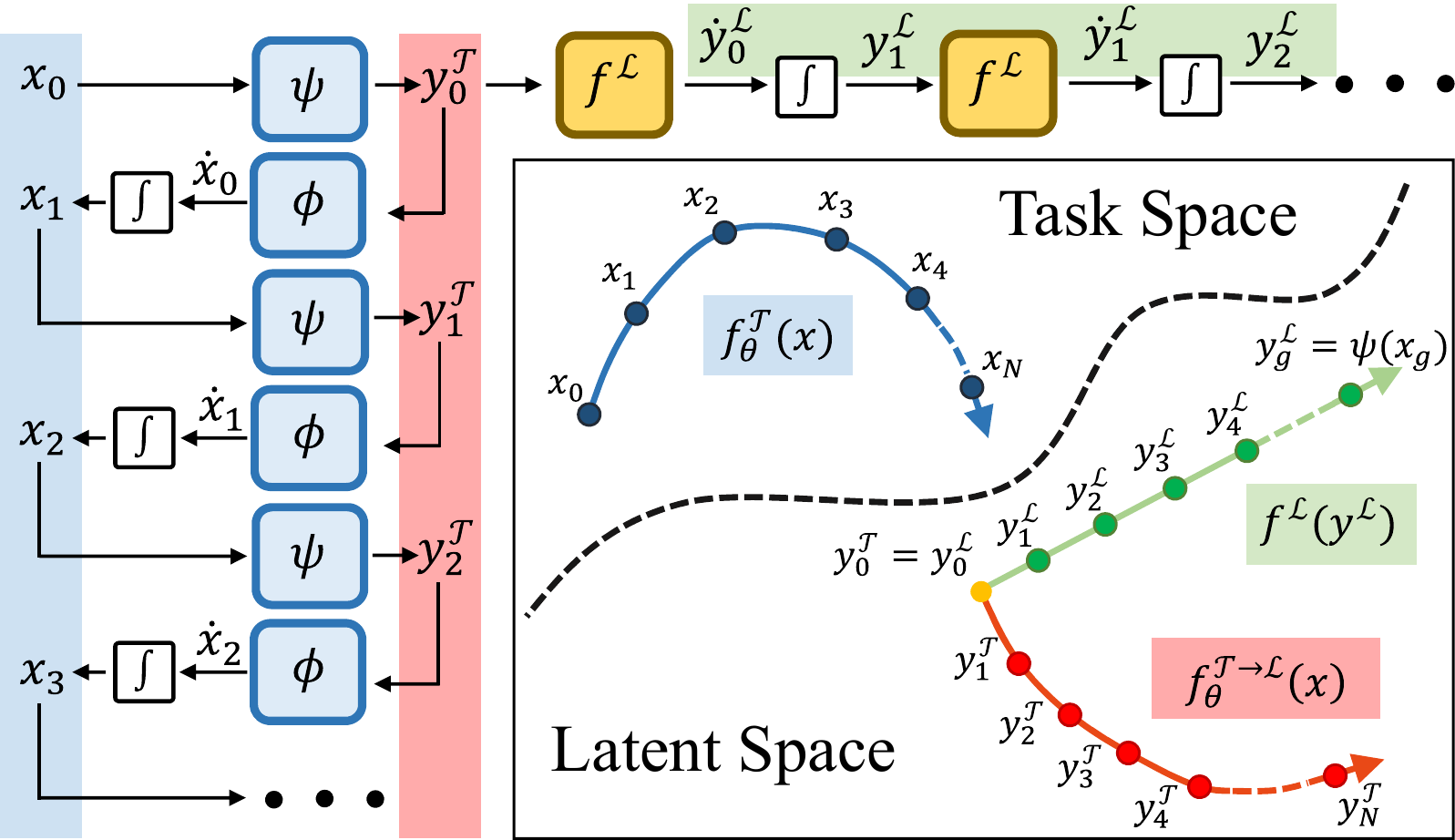}
    \caption{Structure of CONDOR. We show an example of discrete-time trajectories generated with $f_{\theta}^{\mathcal{T}}$, $f^{\mathcal{L}}$ and $f_{\theta}^{\mathcal{T}\to\mathcal{L}}$ \textbf{before training} the DNN. Starting from an initial point $x_{0}$, a trajectory is generated in $\mathcal{T}$ using $f_{\theta}^{\mathcal{T}}$ (blue) and two trajectories are generated in $\mathcal{L}$. One of them follows $f_{\theta}^{\mathcal{T}\to\mathcal{L}}$ (red), and the other follows $f^{\mathcal{L}}$ (green).}
    \label{fig:tasklatent}
\end{figure}
The objective of $\ell_{\text{stable}}$ is to ensure that $f^{\mathcal{T}}_{\theta}$ shares stability properties with a simple and well-understood system. We will refer to this system as $f^{\mathcal{L}}$, which is designed to be stable by construction. Consequently, if $f^{\mathcal{L}}$ is stable, then $f^{\mathcal{T}}_{\theta}$ will also be stable. 

Since $f^{\mathcal{T}}_{\theta}$ is parametrized by a DNN, we can define $f^{\mathcal{L}}$ to reside within the output of one of the hidden layers of $f^{\mathcal{T}}_{\theta}$. This formulation might seem arbitrary; however, it will be shown later that it enables us to introduce the \emph{stability conditions}, which serve as the foundation for designing $\ell_{\text{stable}}$. Therefore, we define the dynamical system $f^{\mathcal{T}}_{\theta}$ as a composition of two functions, $\psi_{\theta}$ and $\phi_{\theta}$,
\begin{equation}
    \dot{x}_{t}=f^{\mathcal{T}}_{\theta}(x_{t}) = \phi_{\theta}(\psi_{\theta}(x_{t})),
\end{equation}
$\forall x_{t} \in \mathcal{T}$. Note that $f^{\mathcal{T}}_{\theta}$ is a standard DNN with $L$ layers. $\psi_{\theta}$ denotes layers $1...l$, and $\phi_{\theta}$ layers $l+1...L$. We define the output of layer $l$ as the latent space $\mathcal{L} \subset \mathbb{R}^{n}$. Moreover, for simplicity, although we use the same $\theta$ notation for both $\psi_{\theta}$ and $\phi_{\theta}$, each symbol actually refers to a different subset of parameters within $\theta$. These subsets together form the full parameter set in $f^{\mathcal{T}}_{\theta}$.

Then, the dynamical system $f^{\mathcal{L}}$ is defined to evolve within $\mathcal{L}$\footnote{Note that before training, this system will not completely reside in $\mathcal{L}$, since it is allowed to evolve outside the image of $\psi_{\theta}$.}. This system is constructed to be stable at the equilibrium $y_{g}=\psi(x_{g})$, and can be described by
\begin{equation}
    \dot{y}_{t}^{\mathcal{L}} = f^{\mathcal{L}}(y_{t}^{\mathcal{L}}),
\end{equation}
$\forall y^{\mathcal{L}}_{t} \in \mathcal{L}$. Here, $y_{t}^{\mathcal{L}}$ corresponds to the \emph{latent state variables} that evolve according to $f^{\mathcal{L}}$.

Lastly, it is necessary to introduce a third dynamical system. This system represents the evolution in $\mathcal{L}$ of the states visited by 
$f^{\mathcal{T}}_{\theta}$ when mapped using $\psi_{\theta}$, which yields the relationship
\begin{equation}
    \dot{y}_{t}^{\mathcal{T}} = f_{\theta}^{\mathcal{T}\to\mathcal{L}}(x_{t}) = \frac{\partial \psi_{\theta}(x_{t})}{\partial t},
\end{equation}
$\forall y^{\mathcal{T}}_{t} \in \mathcal{L}$, where $y_{t}^{\mathcal{T}}$ corresponds to the latent variables that evolve according to $f_{\theta}^{\mathcal{T}\to\mathcal{L}}$.

Figure \ref{fig:tasklatent} summarizes the introduced dynamical systems.

\subsection{Stability Conditions}
The above-presented dynamical systems allow us to introduce the stability conditions. These conditions state that if $f_{\theta}^{\mathcal{T}\to\mathcal{L}}$ exhibits the same behavior as $f^{\mathcal{L}}$, and only $x_{g}$ maps to $\psi_{\theta}(x_{g})$, then $f_{\theta}^{\mathcal{T}}$ is stable. We formally introduce them as follows:
\begin{theorem}[Stability conditions]
\label{theo:stable}
Let $f_{\theta}^{\mathcal{T}}$, $f_{\theta}^{\mathcal{T} \to \mathcal{L}}$ and $f^{\mathcal{L}}$ be the dynamical systems introduced in Section \ref{sec:condor_structure}. Then, $x_{g}$ is a globally asymptotically stable equilibrium of $f_{\theta}^{\mathcal{T}}$ if, $\forall x_{t} \in \mathcal{T}$,:
\begin{enumerate}[font=\itshape]
    \item $f_{\theta}^{\mathcal{T}\to\mathcal{L}}(x_{t})=f^{\mathcal{L}}(y_{t}^{\mathcal{T}})$,
    \item $\psi_{\theta}(x_{t})=y_{g} \Rightarrow x_{t}=x_{g}$.
\end{enumerate}
\end{theorem}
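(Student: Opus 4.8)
The plan is to \emph{transfer} the known stability of $f^{\mathcal{L}}$ to $f_{\theta}^{\mathcal{T}}$ by showing that the latent image of any task-space trajectory is itself a genuine solution of the stable latent system. The natural starting point is the chain rule applied to the third system: writing $y_{t}^{\mathcal{T}}=\psi_{\theta}(x_{t})$, one has
\begin{equation}
 f_{\theta}^{\mathcal{T}\to\mathcal{L}}(x_{t}) = \frac{\partial \psi_{\theta}(x_{t})}{\partial t} = \frac{\partial \psi_{\theta}}{\partial x}(x_{t})\,\dot{x}_{t} = \frac{\partial \psi_{\theta}}{\partial x}(x_{t})\,f_{\theta}^{\mathcal{T}}(x_{t}),
\end{equation}
so $f_{\theta}^{\mathcal{T}\to\mathcal{L}}$ is exactly the push-forward of $f_{\theta}^{\mathcal{T}}$ through $\psi_{\theta}$, i.e., the velocity in $\mathcal{L}$ of the $\psi_{\theta}$-image of a trajectory of $f_{\theta}^{\mathcal{T}}$.

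First I would use Condition 1 to identify $\dot{y}_{t}^{\mathcal{T}} = f_{\theta}^{\mathcal{T}\to\mathcal{L}}(x_{t}) = f^{\mathcal{L}}(y_{t}^{\mathcal{T}})$; that is, the curve $t\mapsto\psi_{\theta}(x_{t})$ satisfies the \emph{same} ODE as $f^{\mathcal{L}}$. Since $f^{\mathcal{L}}$ is continuously differentiable, Picard--Lindel\"of guarantees uniqueness of solutions, so $\psi_{\theta}(x_{t})$ coincides with the trajectory of $f^{\mathcal{L}}$ issued from $\psi_{\theta}(x_{0})$. Global asymptotic stability of $f^{\mathcal{L}}$ at $y_{g}$ then gives $\lim_{t\to\infty}\|\psi_{\theta}(x_{t})-y_{g}\|=0$ for every initial state $x_{0}\in\mathcal{T}$. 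The same identification also settles the equilibrium requirement noted after \eqref{eq:stability}: taking $x_{0}=x_{g}$ yields $\psi_{\theta}(x_{0})=y_{g}$, so by uniqueness $\psi_{\theta}(x_{t})\equiv y_{g}$, and Condition 2 forces $x_{t}\equiv x_{g}$; differentiating this constant trajectory shows $f_{\theta}^{\mathcal{T}}(x_{g})=0$, so $x_{g}$ is genuinely an equilibrium.

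The main obstacle is the final pull-back: converting latent convergence $\psi_{\theta}(x_{t})\to y_{g}$ into task-space convergence $x_{t}\to x_{g}$. Condition 2 supplies the key ingredient by making $x_{g}$ the \emph{unique} preimage of $y_{g}$. The argument I would run is a limit-point/continuity argument: if $x^{*}$ is any accumulation point of $\{x_{t}\}$, then continuity of $\psi_{\theta}$ gives $\psi_{\theta}(x^{*})=\lim\psi_{\theta}(x_{t})=y_{g}$, whence $x^{*}=x_{g}$ by Condition 2; since every accumulation point equals $x_{g}$, the whole trajectory converges to $x_{g}$. The delicate point is guaranteeing that accumulation points exist and that the trajectory cannot drift to infinity while its latent image still converges to $y_{g}$ --- precisely the scenario an unconstrained $\psi_{\theta}$ (as opposed to a proper diffeomorphism) could in principle permit. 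I would therefore either invoke boundedness of the task-space trajectory (confinement to a compact positively invariant subset of $\mathcal{T}$), or, equivalently, phrase the conclusion through a composed Lyapunov function $V(x)=V_{\mathcal{L}}(\psi_{\theta}(x))$, where $V_{\mathcal{L}}$ is a Lyapunov function for $f^{\mathcal{L}}$ (which exists by a converse Lyapunov theorem): its derivative along $f_{\theta}^{\mathcal{T}}$ reduces via Condition 1 to the negative-definite $\dot{V}_{\mathcal{L}}$, vanishing only where $\psi_{\theta}(x)=y_{g}$, i.e., only at $x_{g}$ by Condition 2. In both routes, continuity of $\psi_{\theta}$ together with the uniqueness of the $y_{g}$-preimage is the crux of the argument.
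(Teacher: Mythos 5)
Your proposal follows essentially the same route as the paper's own proof: Condition 1 identifies the latent image $\psi_{\theta}(x_{t})$ of any task-space trajectory with a solution of the stable system $f^{\mathcal{L}}$, so it converges to $y_{g}$, and Condition 2 (uniqueness of the preimage of $y_{g}$) pulls that convergence back to $x_{t}\to x_{g}$. The difference is one of rigor rather than strategy, and it is to your credit. The paper's proof is two sentences long and passes directly from $\psi_{\theta}(x_{t})\to y_{g}$ together with the implication $\psi_{\theta}(x)=y_{g}\Rightarrow x=x_{g}$ to the conclusion $x_{t}\to x_{g}$; for a general continuous $\psi_{\theta}$ this is a non sequitur. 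For instance, with $\psi(x)=x/(1+x^{2})$ and $y_{g}=0$ the preimage of $y_{g}$ is exactly $\{0\}$, yet $\psi(x_{t})\to 0$ along the divergent sequence $x_{t}=t\to\infty$. Your accumulation-point argument --- continuity of $\psi_{\theta}$, uniqueness of the $y_{g}$-preimage, and confinement of the trajectory to a compact set --- is precisely what closes this gap, and the compactness you flag as the delicate ingredient is supplied elsewhere in the paper: Section \ref{sec:boundaries} makes $\mathcal{T}$ a hypercube that is positively invariant under $f^{\mathcal{T}}_{\theta}$, so trajectories cannot drift to infinity. Your converse-Lyapunov alternative ($V=V_{\mathcal{L}}\circ\psi_{\theta}$) is an equally valid way to finish, and your explicit verification that $f_{\theta}^{\mathcal{T}}(x_{g})=0$, which the equilibrium claim formally requires, is another step the paper leaves implicit.
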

\begin{proof}
Since $f^{\mathcal{L}}$ is globally asymptotically stable at $y_{g}$, condition 1) indicates that as $t \to \infty$, $y_{t}^{\mathcal{T}}=y_{t}^{\mathcal{L}} \to y_{g}$. However, from condition 2) we know that $y^{\mathcal{T}}_{t}=\psi_{\theta}(x_{t})=y_{g}$ is only possible if $x_{t}=x_{g}$. Hence, as $t \to \infty$, $x_{t} \to x_{g}$. Then, $x_{g}$ is globally asymptotically stable in $f_{\theta}^{\mathcal{T}}(x_{t})$.
\end{proof}

Consequently, we aim to design $\ell_{\text{stable}}$ such that it enforces the stability conditions in the presented dynamical systems by optimizing $\psi_{\theta}$ and $\phi_{\theta}$.

\subsubsection{Connection with Diffeomorphism-based methods}\label{sec:connection_diffeo}
It is interesting to note that the stability conditions make $\psi_{\theta}$ converge to a \emph{diffeomorphism} between $\mathcal{T}$ and $\mathcal{L}$ (proof in Appendix \ref{appendix:diffeo_proof}). Consequently, our approach becomes tightly connected to methods that ensure stability using diffeomorphic function approximators \cite{perrin2016fast,rana2020euclideanizing,urain2020imitationflow}. However, differently from these methods, we do not require to take into account the structure of the function approximator and explicit relationships between $f_{\theta}^{\mathcal{T}}$ and $f_{\theta}^{\mathcal{T}\to\mathcal{L}}$. 

\subsection{Enforcing Stability}
In this subsection, we introduce a method that enforces the stability conditions in $f_{\theta}^{\mathcal{T}}$. 

\subsubsection{First condition $\left(f_{\theta}^{\mathcal{T}\to\mathcal{L}}=f^{\mathcal{L}}\right)$}
The first stability condition can be enforced by minimizing the distance between the states visited by the dynamical systems $f_{\theta}^{\mathcal{T}\to\mathcal{L}}$ and $f^{\mathcal{L}}$ when starting from the same initial condition. Hence, $\forall y^{\mathcal{T}}_{t}, y^{\mathcal{L}}_{t} \in \mathcal{L}$ a loss can be defined as $\ell_{\text{match}}=d(y_{t}^{\mathcal{T}}, y_{t}^{\mathcal{L}})$, where $d(\cdot, \cdot)$ is a distance function.

\subsubsection{Second condition $\left(\psi_{\theta}(x_{t})=y_{g} \Rightarrow x_{t}=x_{g}\right)$}
The second stability condition, however, can be more challenging to obtain, since we do not have a direct way of optimizing this in a DNN. Therefore, to achieve this, we introduce the following proposition:
\begin{proposition}[Surrogate stability conditions]
\label{prop:sep}
The second stability condition of Theo. \ref{theo:stable}, i.e., $\psi_{\theta}(x_{t})=y_{g} \Rightarrow x_{t}=x_{g}$, $\forall x_{t} \in \mathcal{T}$, is true if:
\begin{enumerate}[font=\itshape]
    \item $f_{\theta}^{\mathcal{T}\to\mathcal{L}}(x_{t})=f^{\mathcal{L}}(y_{t}^{\mathcal{T}})$, $\forall x_{t} \in \mathcal{T}$ (stability condition 1),
    \item $y^{\mathcal{T}}_{t-1} \neq y^{\mathcal{T}}_{t}$, $\forall x_{t} \in \mathcal{T}\setminus\{x_{g}\}$.
\end{enumerate}
\end{proposition}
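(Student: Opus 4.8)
The plan is to prove the contrapositive of the target implication, namely that $x_t \neq x_g \Rightarrow \psi_\theta(x_t) \neq y_g$ for every $x_t \in \mathcal{T}$, which is logically equivalent to $\psi_\theta(x_t) = y_g \Rightarrow x_t = x_g$. The three ingredients I would assemble are: the non-stationarity of the latent trajectory at non-goal states supplied by the second hypothesis, the identification of the latent velocity with $f^{\mathcal{L}}$ supplied by the first hypothesis (stability condition 1), and the fact that the globally asymptotically stable system $f^{\mathcal{L}}$ admits $y_g$ as its \emph{unique} equilibrium.

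First I would pin down the equilibrium structure of $f^{\mathcal{L}}$. Since $f^{\mathcal{L}}$ is globally asymptotically stable at $y_g$, the preliminaries already give $f^{\mathcal{L}}(y_g) = 0$. For uniqueness, suppose $f^{\mathcal{L}}(y') = 0$ for some $y'$; then the trajectory of $f^{\mathcal{L}}$ started at $y'$ is constant and equal to $y'$, yet by global asymptotic stability it must converge to $y_g$, which forces $y' = y_g$. This yields the equivalence $f^{\mathcal{L}}(y) = 0 \Leftrightarrow y = y_g$, the only nontrivial fact I need about $f^{\mathcal{L}}$.

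Next I would translate hypothesis 2 into a statement about the latent velocity and feed it through hypothesis 1. Over a single integration step the latent displacement $y_t^{\mathcal{T}} - y_{t-1}^{\mathcal{T}}$ is proportional, through the step size $\Delta t$, to the value of $f_\theta^{\mathcal{T}\to\mathcal{L}}$ generating the step, so the assumption $y_{t-1}^{\mathcal{T}} \neq y_t^{\mathcal{T}}$ at every $x_t \in \mathcal{T}\setminus\{x_g\}$ is equivalent to $f_\theta^{\mathcal{T}\to\mathcal{L}}(x_t) \neq 0$ there. Substituting the first stability condition $f_\theta^{\mathcal{T}\to\mathcal{L}}(x_t) = f^{\mathcal{L}}(y_t^{\mathcal{T}})$ gives $f^{\mathcal{L}}(y_t^{\mathcal{T}}) \neq 0$ for every $x_t \neq x_g$, and the equivalence established above then rules out $y_t^{\mathcal{T}} = y_g$. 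Since $y_t^{\mathcal{T}} = \psi_\theta(x_t)$, this is precisely $\psi_\theta(x_t) \neq y_g$ whenever $x_t \neq x_g$, completing the contrapositive.

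I expect the delicate step to be the bridge between the discrete-time condition $y_{t-1}^{\mathcal{T}} \neq y_t^{\mathcal{T}}$ and the vanishing of the continuous-time velocity $f_\theta^{\mathcal{T}\to\mathcal{L}}(x_t)$: a literal forward-Euler reading ties the displacement to the velocity at the \emph{previous} state $x_{t-1}$ rather than at $x_t$, so I would either argue in the continuous-time limit, where $y_{t-1}^{\mathcal{T}} \neq y_t^{\mathcal{T}}$ is read as $\dot{y}_t^{\mathcal{T}} \neq 0$, or align the indexing so that the non-stationary step is the one emanating from $x_t$. The only other point needing care is the uniqueness-of-equilibrium argument, which relies on reading global asymptotic stability as forbidding any second fixed point; once these two are secured, the remaining implications are immediate.
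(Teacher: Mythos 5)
Your proof is correct, and it is essentially the paper's argument run in contrapositive form: the paper assumes $\psi_{\theta}(x_{t-1})=y_{g}$, uses condition 1 together with $f^{\mathcal{L}}(y_{g})=0$ to conclude that the latent trajectory is stationary ($y^{\mathcal{T}}_{t-1}=y^{\mathcal{T}}_{t}$), and then invokes condition 2 to force the state to be $x_{g}$; you chain the same two ingredients in the reverse direction. Two remarks. First, your uniqueness-of-equilibrium lemma, while correctly derived from global asymptotic stability, is superfluous: to rule out $y^{\mathcal{T}}_{t}=y_{g}$ once you know $f^{\mathcal{L}}(y^{\mathcal{T}}_{t})\neq 0$, you only need the implication $y=y_{g}\Rightarrow f^{\mathcal{L}}(y)=0$, i.e., exactly the fact $f^{\mathcal{L}}(y_{g})=0$ that the paper uses; the converse direction never enters. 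Second, the indexing wrinkle you flag is real, but it is not an obstacle specific to your route --- it is present verbatim in the paper's own proof, which concludes $x_{t-1}=x_{g}$ even though condition 2 as literally stated (quantified over $x_{t}$) would yield $x_{t}=x_{g}$; the intended reading, which both you and the paper effectively adopt, is that the separation requirement is indexed by the state from which the latent step emanates. With that reading fixed, your forward chain (non-stationarity $\Rightarrow$ nonzero latent velocity $\Rightarrow$ nonzero value of $f^{\mathcal{L}}$ $\Rightarrow$ not $y_{g}$) is sound.
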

\begin{proof}
If $y^\mathcal{T}_{t-1}=\psi(x_{t-1})=y_{g}$ the first condition implies that $y_{g}=y^{\mathcal{T}}_{t-1}=y^{\mathcal{T}}_{t}$, since $f^{\mathcal{L}}(y_{g})=0$. Consequently, given the second condition $y^{\mathcal{T}}_{t-1} \neq y^{\mathcal{T}}_{t}$, $\forall x_{t} \neq x_{g}$, this is only possible if $x_{t-1} = x_{g}$.
\end{proof}

In other words, Prop. \ref{prop:sep} indicates that if the first stability condition is true; then, we can obtain its second condition by enforcing $y^{\mathcal{T}}_{t-1} \neq y^{\mathcal{T}}_{t}$, $\forall x_{t} \neq x_{g}$. Notably, by enforcing this, the stability conditions are also enforced. Consequently, we refer to the conditions of Prop. \ref{prop:sep} as the \emph{surrogate stability conditions}.

Then, it only remains to define a loss $\ell_{\text{sep}}$ that enforces the second surrogate stability condition in $f_{\theta}^{\mathcal{T}}$. However, before doing so, note that the surrogate conditions aim to push some points together (i.e., $y^{\mathcal{T}}_{t}$ and $y^{\mathcal{L}}_{t}$) and separate others (i.e., $y^{\mathcal{T}}_{t-1}$ and $y^{\mathcal{T}}_{t}$). Hence, this problem overlaps with the Contrastive Learning (CL) and Deep Metric Learning literature \cite{kaya2019deep}.

\subsubsection{Contrastive Learning}
The problem of pushing some points together ($\ell_{\text{match}}$) and separating others ($\ell_{\text{sep}}$), is equivalent to the problem that the pairwise contrastive loss, from the CL literature, optimizes \cite{chopra2005learning}. This loss computes a cost that depends on positive and negative samples. Its objective is to reduce the distance between positive samples and separate negative samples beyond some margin value $m \in \mathbb{R}^{+}$.

In our problem, positive samples are defined as $y^{\mathcal{L}}_{t}$ and $y^{\mathcal{T}}_{t}$, and negative samples are defined as $y^{\mathcal{T}}_{t-1}$ and $y^{\mathcal{T}}_{t}$. The loss for positive samples is the same as $\ell_{\text{match}}$. Differently, for negative samples, this method separates points by minimizing $\ell_{\text{sep}}=\max(0, m - d(y^{\mathcal{T}}_{t-1}, y^{\mathcal{T}}_{t}))$, $\forall y^{\mathcal{T}}_{t-1}, y^{\mathcal{T}}_{t} \in \mathcal{L}$. If their distance is smaller than $m$, $m-d(y_{t-1}^{\mathcal{T}}, y_{t}^{\mathcal{T}}) > 0$, which is minimized until their distance is larger than $m$ and $m-d(y_{t-1}^{\mathcal{T}}, y_{t}^{\mathcal{T}}) < 0$. 

Commonly, the squared $l^{2}$-norm is used as the distance metric. Moreover, this loss is optimized along a trajectory starting at $t=1$, which is a state sampled randomly from the task space $\mathcal{T}$. Then, we define a contrastive loss for motion stability as
\begin{equation}
    \underbrace{\sum_{b=0}^{B^{s}-1}\sum_{t=1}^{H^{s}} \underbrace{||y^{\mathcal{L}}_{t, b}- y^{\mathcal{T}}_{t, b}||_{2}^{2}}_{\ell_{\text{match}}} + \underbrace{\max(0, m - ||y^{\mathcal{T}}_{t, b}- y^{\mathcal{T}}_{t-1, b}||_{2})^{2}}_{\ell_{\text{sep}}}}_{\ell_{\text{stable}}},
\label{eq:contrastive_loss}
\end{equation}
where $B^{s}, H^{s} \in \mathbb{N}^{+}$ are the batch size corresponding to the number of samples used at each training iteration of the DNN and $H^{s}$ is the trajectory length used for training, respectively. 

Note that \eqref{eq:contrastive_loss} does not take into account the fact that $\ell_{\text{sep}}$ should not be applied at $y_{g}$. However, in practice, it is very unlikely to sample $x_{g}$, so we do not deem it necessary to explicitly consider this case. Furthermore, the loss $\ell_{\text{match}}$ enforces $f^{\mathcal{T}}(y_{g})=0$, which also helps to keep $y^\mathcal{T}_{t-1}$ and $y^{\mathcal{T}}_{t}$ together when $y^\mathcal{T}_{t-1}=y_{g}$.

\subsubsection{Relaxing the problem}
We can make use of the CL literature to use other losses to solve this problem. More specifically, we study the triplet loss \cite{schroff2015facenet} as an alternative to the pairwise loss. We call this version CONDOR (relaxed), since, in this case, the positive samples $y^{\mathcal{T}}_{t}$ and $y^{\mathcal{L}}_{t}$ are pushed closer, but it is not a requirement for them to be the same. Hence, we aim to observe if learning a specific structure in $\mathcal{L}$ is enough to enforce stability in $f_{\theta}^{\mathcal{T}}$, even though \eqref{eq:contrastive_loss} is not solved exactly. This allows to compare different features between losses, such as generalization capabilities.

\subsection{Boundaries of the Dynamical System}\label{sec:boundaries}
We enforce the stability of a motion in the region $\mathcal{T}$ by randomly sampling points from it and minimizing \eqref{eq:contrastive_loss}. Since this property is learned by a DNN, stability cannot be ensured in regions of the state space where this loss is not minimized, i.e.,  outside of $\mathcal{T}$. Therefore, it is crucial to ensure that if a point belongs to $\mathcal{T}$, its evolution will not leave $\mathcal{T}$. In other words, $\mathcal{T}$ must be a \emph{positively invariant set} w.r.t. $f^{\mathcal{T}}_{\theta}$ \cite{lemme2014neural,khalil2015nonlinear}.

To address this, we design the dynamical system such that, by construction, is not allowed to leave $\mathcal{T}$. This can be easily achieved by projecting the transitions that leave $\mathcal{T}$ back to its boundary, i.e., if a point $x_{t} \in \mathcal{T}$ transitions to a point $x_{t+1} \notin \mathcal{T}$; then, it is projected to the boundary of $\mathcal{T}$. In this work, $\mathcal{T}$ is a hypercube; consequently, we apply an orthogonal projection by saturating/clipping the points that leave $\mathcal{T}$.

Note that this saturation is always applied, i.e., during the training and evaluation of the dynamical system. Hence, the stability conditions of Theo. \ref{theo:stable} are imposed on a system that evolves in the positively invariant set $\mathcal{T}$.

\subsection{Designing $f^{\mathcal{L}}$}
So far, we presented a method for coupling two dynamical systems such that they share stability properties; however, we assumed that $f^{\mathcal{L}}$ existed. In reality, we must design this function such that it is stable by construction. Although several options are possible, in this work, we define $f^{\mathcal{L}}$ as
\begin{equation}
\label{eq:DSy}
    \dot{y}_{t} = \alpha \odot (y_{g} - y_{t}),
\end{equation}
where $\alpha \in \mathbb{R}^{n}$ corresponds to the gains vector and $\odot$ to the element-wise/hadamard product. If $\alpha_{i} > 0$\footnote{This holds for the continuous-time case. However, we approximate the evolution of this system via the forward Euler integration method. Then, the system can be written for the discrete-time case as $y_{t+1}=Ay_{t}$, where $A = I + \text{diag}(-\alpha)\Delta t$, assuming $y_{g}=0$ without loss of generality. To ensure stability, the absolute value of the eigenvalues of $A$ must be less than one; then, $0 < \alpha_{i} < 2/ \Delta t$. }, this system monotonically converges to $y_{g}$ \cite{hunter2011introduction}, where $\alpha_{i}$ corresponds to the i-th element of $\alpha$. 

\subsubsection{Adaptive gains}\label{sec:adaptive_gains}
In the simplest case, $\alpha$ is a fixed, pre-defined, value; however, the performance of the learned mappings $\psi$ and $\phi$ is susceptible to the selected value of $\alpha$. Alternatively, to provide more flexibility to the framework, we propose to define $\alpha$ as a trainable function that depends on the current latent state $y_{t}$, i.e., $\alpha(y_{t})$. 
Then, the parameters of $\alpha$ can be optimized using the same losses employed to train $\psi$ and $\phi$, since it is connected to the rest of the network and the training error can be propagated through it. 

Note that this system is stable under the same condition for $\alpha$ as before, as shown in Appendix \ref{appendix:stability_adaptive}.

\subsection{Behavioral Cloning of Dynamical Systems}
Finally, we need to optimize an Imitation Learning loss $\ell_{\text{IL}}$ such that the learned dynamical system $f^{\mathcal{T}}_{\theta}$ follows the demonstrations of the desired motion. For simplicity, we opt to solve a Behavioral Cloning (BC) problem; however, in principle, any other IL approach can be used. As described in Section \ref{sec:preliminaries}, this can be achieved by minimizing the (forward) Kullback-Leibler divergence between the demonstration's trajectory distribution and the trajectory distribution induced by the learned dynamical system. Note that this problem formulation is equivalent to applying Maximum Likelihood Estimation (MLE) between these distributions \cite{bishop2006pattern}; hence, we can rewrite it as
\begin{equation}
    f^{\mathcal{T}}_{\theta} = \argmax_{f^{\mathcal{T}}_{\theta} \in \mathcal{F}} \mathbb{E}_{\tau \sim p^{*}(\tau)}\left[ \ln p_{\theta}(\tau)\right].
\label{eq:bc_1}
\end{equation}

If we note that $p_{\theta}(\tau)$ is a product of conditional transition distributions $p_{\theta}(x_{t+1}|x_{t})$, we can rewrite it as \text{$p_{\theta}(\tau)=\Pi^{T-1}_{t=0}p_{\theta}(x_{t+1}|x_{t})p(x_{0})$}, where $p(x_{0})$ is the initial state probability distribution. Replacing this in \eqref{eq:bc_1} and ignoring constants we obtain  
\begin{equation}
    f^{\mathcal{T}}_{\theta} = \argmax_{f^{\mathcal{T}}_{\theta} \in \mathcal{F}} \mathbb{E}_{\substack{x_{t+1} \sim p^{*}(x_{t+1}|x_{t}), \\ x_{t} \sim p^{t*}(x_{t})}}\left[\sum_{t=0}^{T-1} \ln p_{\theta}(x_{t+1}|x_{t})\right],
\label{eq:bc_2}
\end{equation}
where $p^{t*}(x_{t})$ is the probability distribution of states at time step $t$, and $p^{*}(x_{t+1}|x_{t})$ is the distribution of transitioning to state $x_{t+1}$ given that the system is in some state $x_{t}$. Both of these distributions are induced by the dynamical system $f^{*}$. 

In practice, however, we do not have an analytical representation of the distributions $p^{t*}(x_{t})$ and $p^{*}(x_{t+1}|x_{t})$. Therefore, the problem has to be estimated through empirical evaluations of this objective, which is achieved using the demonstrations present in the dataset $\mathcal{D}$. Then, we can solve this problem iteratively \cite{abramson2020imitating} by randomly sampling batches of $B^{i}$ trajectories from $\mathcal{D}$ at each iteration and maximizing
\begin{equation}
    f^{\mathcal{T}}_{\theta} = \argmax_{f^{\mathcal{T}}_{\theta} \in \mathcal{F}} \sum_{b=0}^{B^{i}-1}\sum_{t=0}^{T-1} \ln p_{\theta}(x^{*}_{t+1, b}|x_{t, b}),
\label{eq:bc_3}
\end{equation} 
where the subscript $b$ has been added to the states indicating their correspondence to the different trajectories of $B$.

To solve this problem, we can assume the transition distribution of the learning system to be a Gaussian with fixed covariance, and a mean corresponding to the \emph{forward Euler integration} \cite{legaard2021constructing} of $f^{\mathcal{T}}_{\theta}$ for the given state $x_{t,b}$, i.e., $x_{t+1,b}=x_{t,b} + f^{\mathcal{T}}_{\theta}(x_{t,b})\Delta t$, where $\Delta t$ corresponds to the time step size. Furthermore, the same Gaussian assumption is made for the demonstration's distribution $p^{*}(x_{t+1}|x_{t})$; however, since its transitions are obtained directly from the demonstrations, it is not necessary to integrate in this case. Then, \eqref{eq:bc_3} reduces to the Mean Squared Error (MSE) minimization between the mean of the demonstration's distribution $p^{*}(x_{t+1}|x_{t})$, and the mean of the learning distribution $p_{\theta}(x_{t+1}|x_{t})$ \cite{osa2018algorithmic}, i.e.,
\begin{equation}
    f^{\mathcal{T}}_{\theta} = \argmin_{f^{\mathcal{T}}_{\theta} \in \mathcal{F}} \sum_{b=0}^{B^{i}-1}\sum_{t=0}^{T-1}  ||x^{*}_{t+1,b} - \left(x_{t,b} + f^{\mathcal{T}}_{\theta}(x_{t,b})\Delta t\right)||_{2}^{2},
\label{eq:bc_4}
\end{equation}

In practice, however, if the trajectories of the demonstrations are too long, due to computation or complexity limitations, it might not be convenient to optimize this objective for the complete trajectories. Therefore, this problem can be simplified by allowing the initial conditions of the demonstration batches to be at any time step $t' \in \{0, ..., T-1\}$, and optimizing the problem for some time horizon $H^{i}\leq T$. Consequently, we get the loss $\ell_{\text{IL}}$ that we employ to solve the BC problem in this work:
\begin{equation}
    f^{\mathcal{T}}_{\theta} = \argmin_{f^{\mathcal{T}}_{\theta} \in \mathcal{F}} \underbrace{\sum_{b=0}^{B^{i}-1}\sum_{t=t'}^{H^{i}-1}  ||x^{*}_{t+1,b} - x_{t,b} - f^{\mathcal{T}}_{\theta}(x_{t,b})\Delta t||_{2}^{2}}_{\ell_{\text{IL}}}.
\label{eq:bc_5}
\end{equation}

\begin{figure}[t]
    \centering
    \includegraphics[width=\columnwidth]{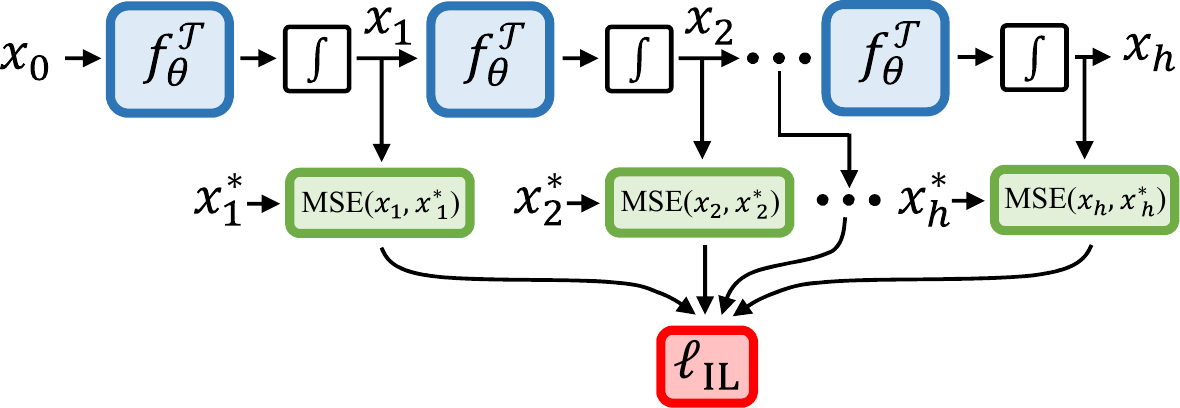}
    \caption{Multi-step IL loss for one sample when using backpropagation through time, where $h=H^{i}$.}
    \label{fig:multi_step}
\end{figure}
\subsection{Compounding errors and multi-step learning}
Commonly, \eqref{eq:bc_5} is solved as a single-step prediction problem (i.e., $H^{i}=1$) by computing only one transition from $x_{t',b}$ using $f^{\mathcal{T}}_{\theta}$ and comparing it against $x^{*}_{t'+1,b}$. Nevertheless, in practice, the learned dynamical system is applied recursively, i.e., assuming perfect tracking, every prediction is computed as a function of a previously computed output using the following equation:
\begin{equation}
    x_{h} = x_{h-1} + f^{\mathcal{T}}_{\theta}(... x_{1} + f^{\mathcal{T}}_{\theta}(x_{0} + f^{\mathcal{T}}_{\theta}(x_{0})\Delta t)\Delta t...)\Delta t,
\label{eq:recursive_prediction}
\end{equation}
where $h$ is the evolution horizon. Therefore, the prediction error of $f^{\mathcal{T}}_{\theta}$ compounds and grows multiplicatively by every new prediction \cite{lambert2022investigating,bontempi2012machine}. This makes the dynamical system diverge away from the states present in the demonstration's trajectories, requiring the system to make predictions in states that are not supported by the training data, which is known as the \emph{covariate shift} problem \cite{osa2018algorithmic}. Consequently, the prediction error grows even larger.

An important reason for this issue to occur is that the learned system is expected to act over multiple steps when it is only being trained for predicting single steps. To alleviate this problem, the dynamical system must be trained for predicting multiple steps, by setting $H^{i}>1$ and computing $x_{t,b}$ in \eqref{eq:bc_5} recursively, as shown in Fig. \ref{fig:multi_step}. In practice, however, the single-step loss is commonly employed, as the multi-step loss has been regarded as being challenging to optimize, even for short prediction horizons \cite{venkatraman2015improving}. Nevertheless, these challenges can be addressed with current DNN optimization techniques.

Consequently, we optimize the multi-step loss by noting that this can be achieved using \emph{backpropagation through time} \cite{werbos1990backpropagation,langford2009learning}, which has become popular and improved given its use in Recurrent Neural Networks (RNNs) \cite{graves2013generating}. Hence, its limitations such as \emph{exploding/vanishing gradients} or \emph{ill-conditioning} \cite{bengio1994learning} have been alleviated. Furthermore, specifically for this case, we can observe that every forward integration step in \eqref{eq:recursive_prediction} can be interpreted as one group of layers inside a larger DNN that computes $x_{h}$. Then, each one of these groups has the same structure as the \emph{residual blocks} in \emph{ResNet} \cite{he2016deep}, which have also shown to be beneficial for alleviating vanishing/exploding gradients issues \cite{veit2016residual}.



\section{Simulated experiments}\label{sec:simulated_experiments}
In this section, we employ datasets of human handwriting motions to validate our method. Although these datasets contain human demonstrations, our evaluation of the learned motions is \emph{simulated}, since no real system is involved in this process.  This can be better understood with Fig. \ref{fig:control_strategies}, where we show two different control strategies that can be employed with CONDOR. More specifically, Fig. \ref{fig:offline_control} presents an offline control strategy where a trajectory is computed and stored in a buffer by applying CONDOR recursively. Afterwards, this trajectory is tracked by a low-level controller. In our evaluation, however, we ignore the low-level controller part and evaluate CONDOR using only the trajectory provided by the buffer, i.e., we assume that the trajectory is tracked perfectly. Despite this assumption, this methodology with this dataset has been extensively used in the literature, since it allows to test if the learning method generates adequate state transition requests \cite{khansari2011learning,rana2020euclideanizing,urain2020imitationflow}.

The DNN architecture and hyperparameter optimization process of the models used in this section are described in Appendices \ref{appendix:neural_network} and \ref{appendix:hyerparameter_optimization}, respectively.

\begin{figure}[t]
\centering
\subfloat[Online control. At every time step, $f^{\mathcal{T}}_{\theta}$ receives $x_{t}$ from the sensors of the robot, and its output is fed to a low-level controller that tracks it. ]{\includegraphics[width=0.4\linewidth,valign=t]{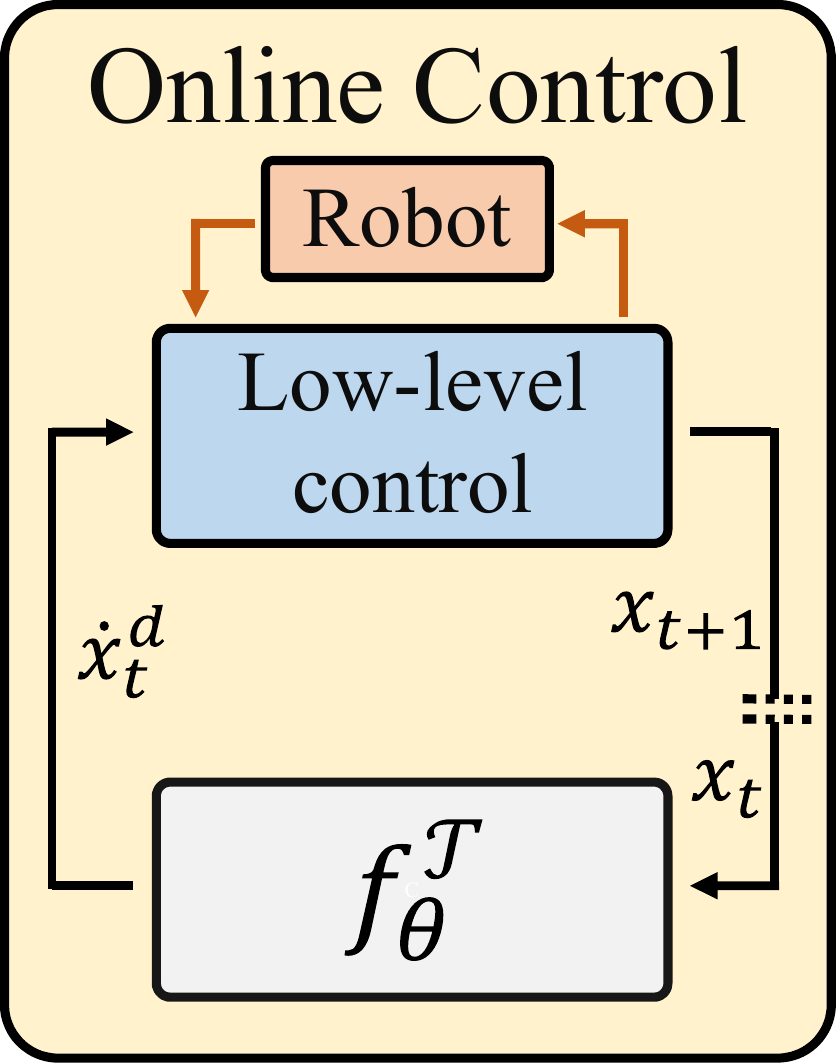}%
\vphantom{\includegraphics[width=0.5\linewidth,valign=t]{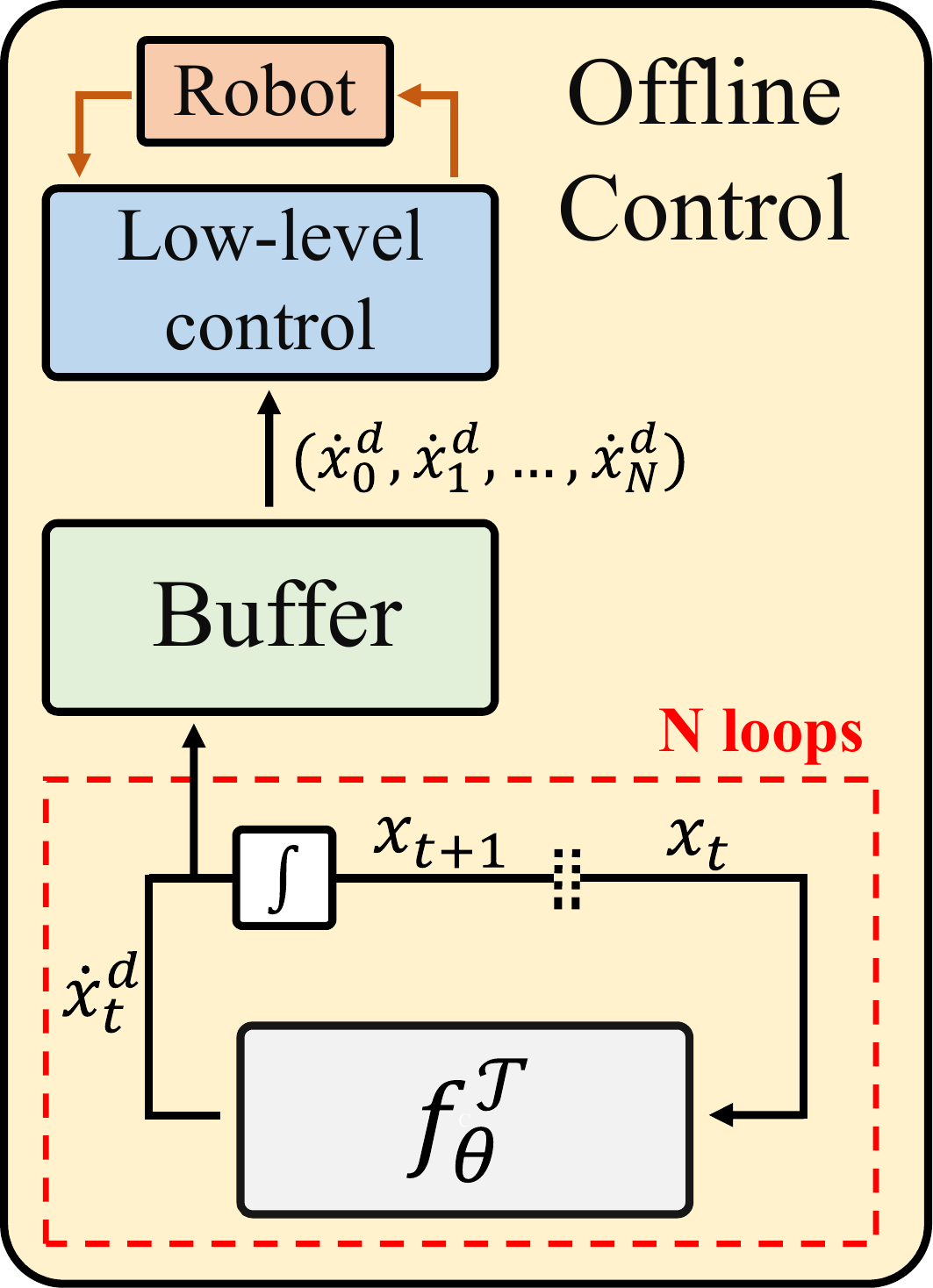}}%
\label{fig:online_control}%
}
\quad
\subfloat[Offline control. $f^{\mathcal{T}}_{\theta}$ is applied recursively N times to create, offline, a trajectory that is stored in a buffer. Afterward, the complete trajectory is tracked with a low-level controller.]{\includegraphics[width=0.5\linewidth,valign=t]{figures/offline_control.pdf}%
\label{fig:offline_control}%
}
\caption{Control strategies that can be used with CONDOR.}
\label{fig:control_strategies}
\end{figure}

\begin{figure*}[t]
    \centering
    \includegraphics[width=0.9\linewidth]{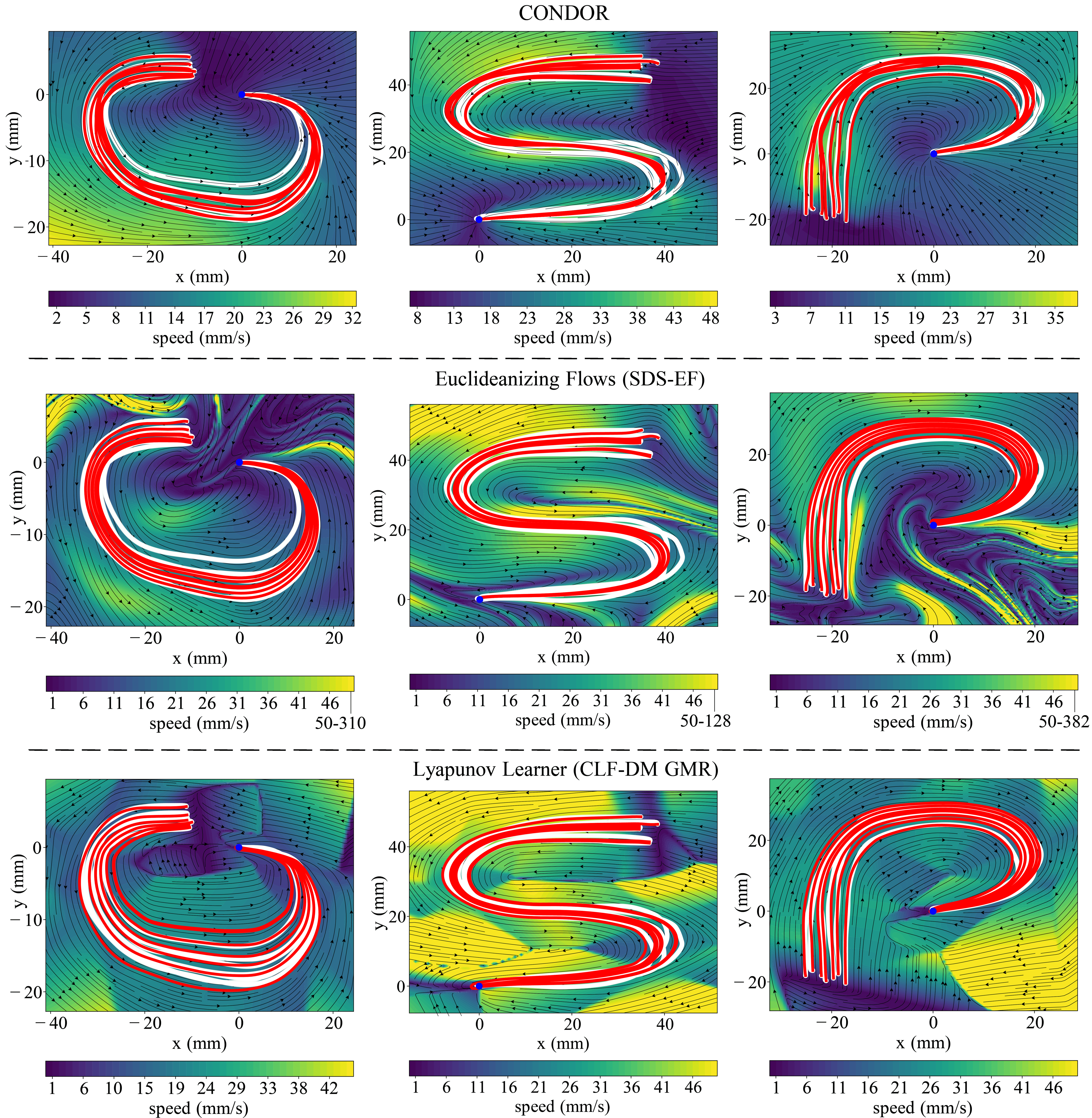}
    \caption{Examples of LASA dataset motions learned using CONDOR, SDS-ES, and CLF-DS (GMR). White curves represent the demonstrations. Red curves represent the executed motions by the learned model when starting from the same initial positions as the demonstrations. The arrows indicate the vector field of the learned dynamical system (velocity outputs for every position). In SDS-ES, every speed greater than 50 mm/s is saturated to this value.}
    \label{fig:LASA_examples}
\end{figure*}
%

\begin{figure}[t]
    \centering
    \includegraphics[width=0.8\columnwidth]{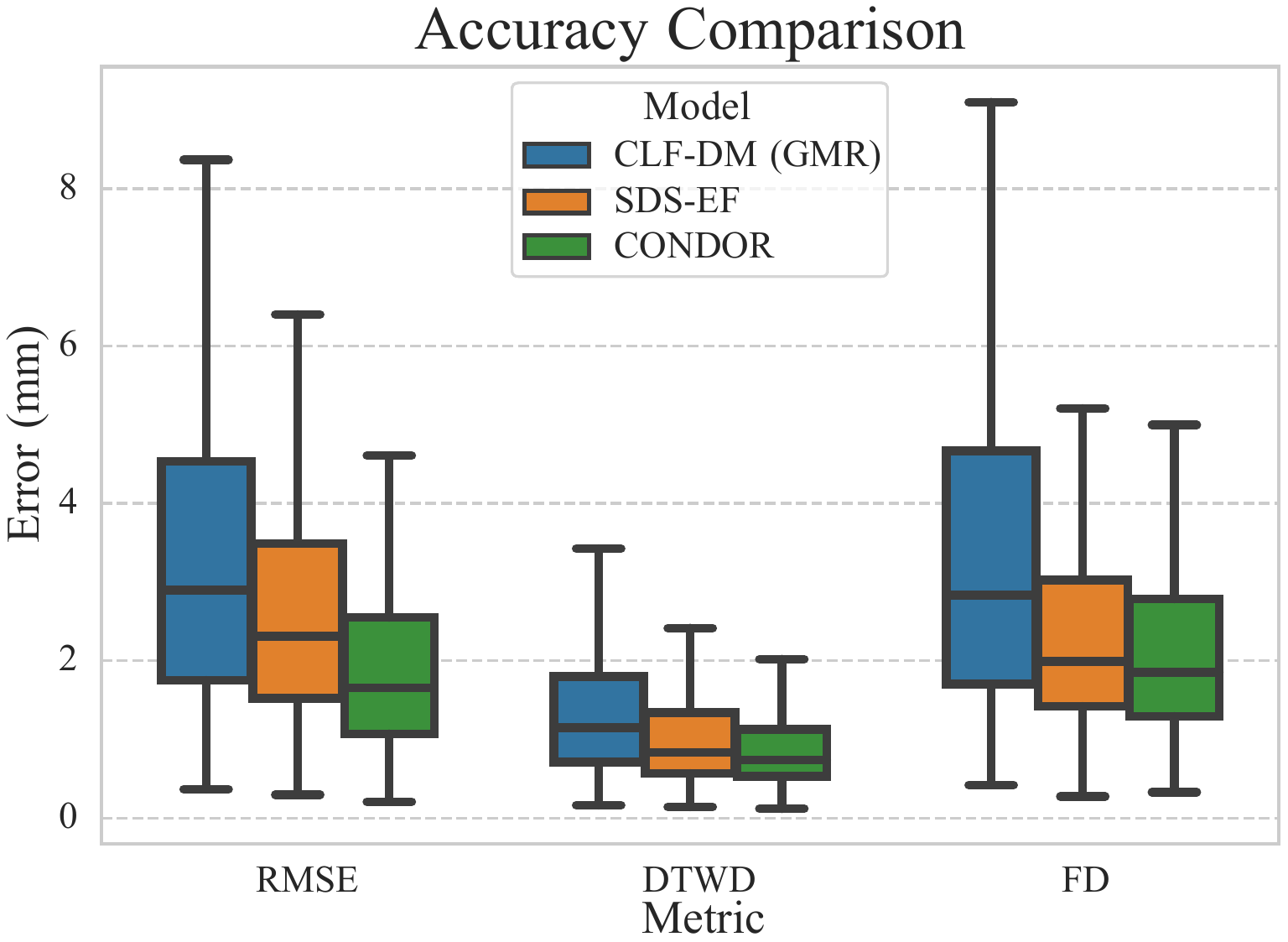}
    \caption{Accuracy comparison of CONDOR against state-of-the-art methods. Each box plot summarizes performance over the 30 motions of the LASA dataset.}
    \label{fig:boxplot_art_state_comparison}
\end{figure}

\subsection{LASA dataset validation: first-order 2-dimensional motions}\label{sec:lasa_validation}
We validate our method using the LASA dataset\footnote{\url{https://cs.stanford.edu/people/khansari/download.html}}, which comprises 30 human handwriting motions. Each motion, captured with a tablet PC, includes 7 demonstrations of a desired trajectory from different initial positions. The state is represented as 2-dimensional positions, and the learned systems are of first order, i.e., the output of $f_{\theta}^{\mathcal{T}}$ is a desired velocity. Although the demonstrations may have local intersections due to human inaccuracies, the shapes contained in this dataset can be well represented using $\text{first-order}$ dynamical systems, which cannot represent intersections. Consequently, we employ the LASA dataset to evaluate motions modeled as first-order dynamical systems, which is the same approach that was taken by the paper that introduced this dataset \cite{khansari2011learning}.

Fig. \ref{fig:LASA_examples} shows three examples of dynamical systems learned with CONDOR. These motions share three features:
\begin{enumerate}
    \item \textbf{Adequate generalization:} motions generated in regions with no demonstrations smoothly generalize the behavior presented in the demonstrations.
    \item \textbf{Accuracy:} the learned models accurately reproduce the demonstrations.
    \item \textbf{Stability:} the vector fields suggest that, independently of the initial conditions, every motion reaches the goal.
\end{enumerate}

In the following subsections, we provide further details regarding each one of these points. Moreover, we compare CONDOR\footnote{CONDOR code repository: \url{https://github.com/rperezdattari/Stable-Motion-Primitives-via-Imitation-and-Contrastive-Learning}} with two other state-of-the-art methods for stable motion generation: 1) Control Lyapunov Function-based Dynamic Movements (CLF-DM) using Gaussian Mixture Regression (GMR)\footnote{CLF-DM code repository: \url{https://github.com/rperezdattari/Learning-Stable-Motions-with-Lyapunov-Functions}} \cite{khansari2014learning}, and 2) Stable Dynamical System learning using Euclideanizing Flows (SDS-EF)\footnote{SDS-EF code repository: \url{https://github.com/mrana6/euclideanizing_flows}} \cite{rana2020euclideanizing}. CLF-DM (GMR) learns a dynamical system using a GMR and corrects its behavior whenever it is not stable according to a learned Lyapunov function. SDS-EF is a diffeomorphism shaping method, as introduced in Section \ref{sec:connection_diffeo}.

\subsubsection{Generalization}
Fig. \ref{fig:LASA_examples} also depicts the performance of CLF-DM and SDS-EF on three motions. Here, we observe that even though the stability of these methods is guaranteed, unlike CONDOR, the behavior that they present in regions without demonstrations might not always be desired. 

In the case of SDS-EF, unpredictable motions can be generated\footnote{These results are not completely consistent with the ones reported in \cite{rana2020euclideanizing}, since we removed additional preprocessing (smoothing and subsampling) to compare every method under the same conditions.}  (e.g., bottom image, bottom-right quadrant), which, furthermore, can reach very high speeds (e.g., 382 mm/s, while the demonstrations exhibit maximum speeds of around 40 mm/s). Note, however, that this issue can be alleviated by optimizing SDS-EF for a shorter period of time, but this also makes it less accurate. 
Such unpredictability and high speeds can be a limitation in real-world scenarios. For instance, when humans interact with robots and must feel safe around them, or due to practical limitations, e.g., it is unfeasible to track the requested motions with a low-level controller.

Differently, in the case of CLF-DM, nonsmooth transitions are present in some regions of the state space due to the corrections applied by the Lyapunov function. This can also be a limitation, since robotic systems commonly avoid nonsmooth trajectories to minimize the risk of damage \cite{ravichandar2020recent}.

Lastly, Fig. \ref{fig:LASA_examples} evidences that, in real-world scenarios, CLF-DM and SDS-EF are susceptible to making robots leave their workspaces. These methods do not constrain their trajectories to reside inside a specific space, they only guarantee that, eventually, these will converge to the goal. In practice, however, the learned trajectories might need to leave a robot's workspace to reach the goal. Then, in Fig. \ref{fig:LASA_examples}, if we assume that the observed regions are a robot's workspace, the vector fields of CLF-DM and SDS-EF indicate that some motions depart from it. In contrast, in CONDOR the workspace is a positively invariant set w.r.t. the learned dynamical system (see Section \ref{sec:boundaries}); consequently, motions stay inside it.  

\subsubsection{Accuracy}
\label{sec:accuracy_description}
Fig. \ref{fig:LASA_examples} indicates that every method is able to accurately reproduce the demonstrations. However, CLF-DM is clearly less accurate than CONDOR and SDS-EF. For instance, in the bottom-left image, the inner red trajectory drifts away from the demonstrations, coming back to them at the end of the motion due to its stability properties.

Quantitatively speaking, we can employ different metrics to evaluate the accuracy of the learned trajectories (see Fig. \ref{fig:boxplot_art_state_comparison}). Commonly, a distance between two trajectories is minimized; one trajectory corresponds to a demonstration, and the other corresponds to the one generated by the learned dynamical system when starting from the same initial condition as the demonstration. However, different distances between trajectories can be computed depending on the features that we aim to evaluate from the trajectories. To have a more complete view of the accuracy performance of our method, we compare CONDOR, CLF-DM (GMR)\footnote{Each GMR consisted of 10 Gaussians and each Lyapunov function was estimated using 3 asymmetric quadratic functions.} and SDS-EF\footnote{Results were extracted from \cite{rana2020euclideanizing}.} under three distance metrics: 1) Root Mean Squared Error (RMSE), 2) Dynamic Time Warping Distance (DTWD) \cite{muller2007dynamic}, and 3) Frechet Distance (FD) \cite{eiter1994computing}. 

We can observe that CONDOR clearly achieves better results against CLF-DM (GMR) under every metric, while a smaller gap, yet superior, is achieved against SDS-EF. 

\subsubsection{Stability}
Lastly, we quantitatively study the stability properties of CONDOR. As mentioned in Section \ref{sec:method}, the stability of the motions it learns depends on the optimization problem being properly minimized. Therefore, we need to empirically test this after the optimization process finishes.

To achieve this, we integrate the dynamical system for $L$ time steps, starting from $P$ initial states, and check if the system converges to the goal (i.e., fixed-point iteration, where the fixed point corresponds to the goal). The larger the $P$, the more accurate the results we obtain. If $L$ is large enough, the system should converge to the goal after $L$ steps. Hence, by computing the distance between the last visited state and the goal, and checking that it is below some predefined threshold $\epsilon$, it is possible to evaluate if a trajectory is successful or not (i.e., if it converges to the goal).

We evaluated CONDOR using all of the motions present in the LASA dataset with $L=2000$ and $P=1225$ with $\epsilon=1$mm, and observed that $100\%$ of the trajectories reached the goal. Hence, CONDOR is able to successfully learn stable motions.

\begin{figure}[t]
    \centering
    \includegraphics[width=\columnwidth]{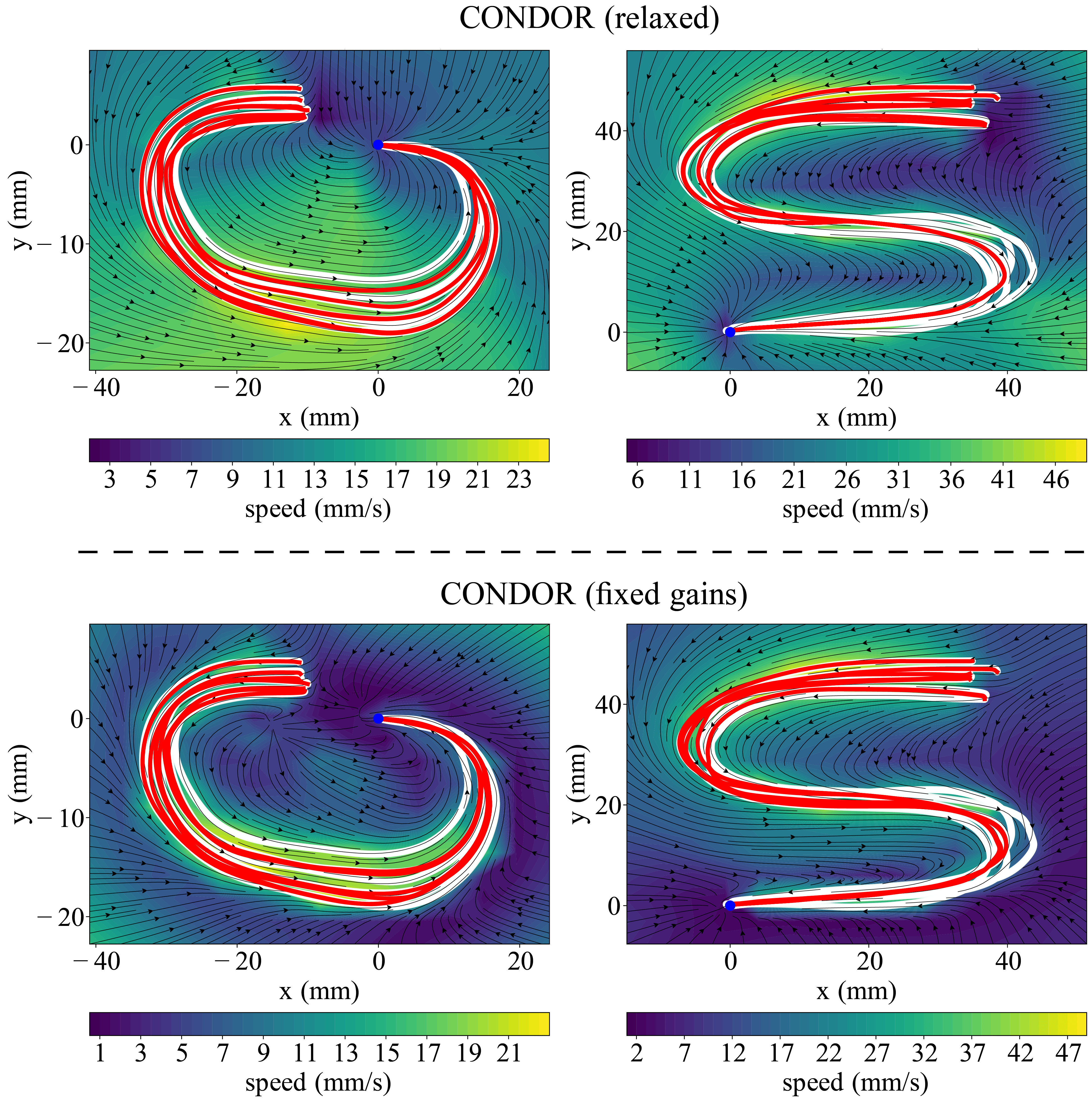}
    \caption{Examples of LASA dataset motions learned using two variations of CONDOR: 1) relaxed and 2) fixed gains.}
    \label{fig:quali_relaxed_fixed}
\end{figure}

\begin{figure}[t]
    \centering
    \includegraphics[width=0.8\columnwidth]{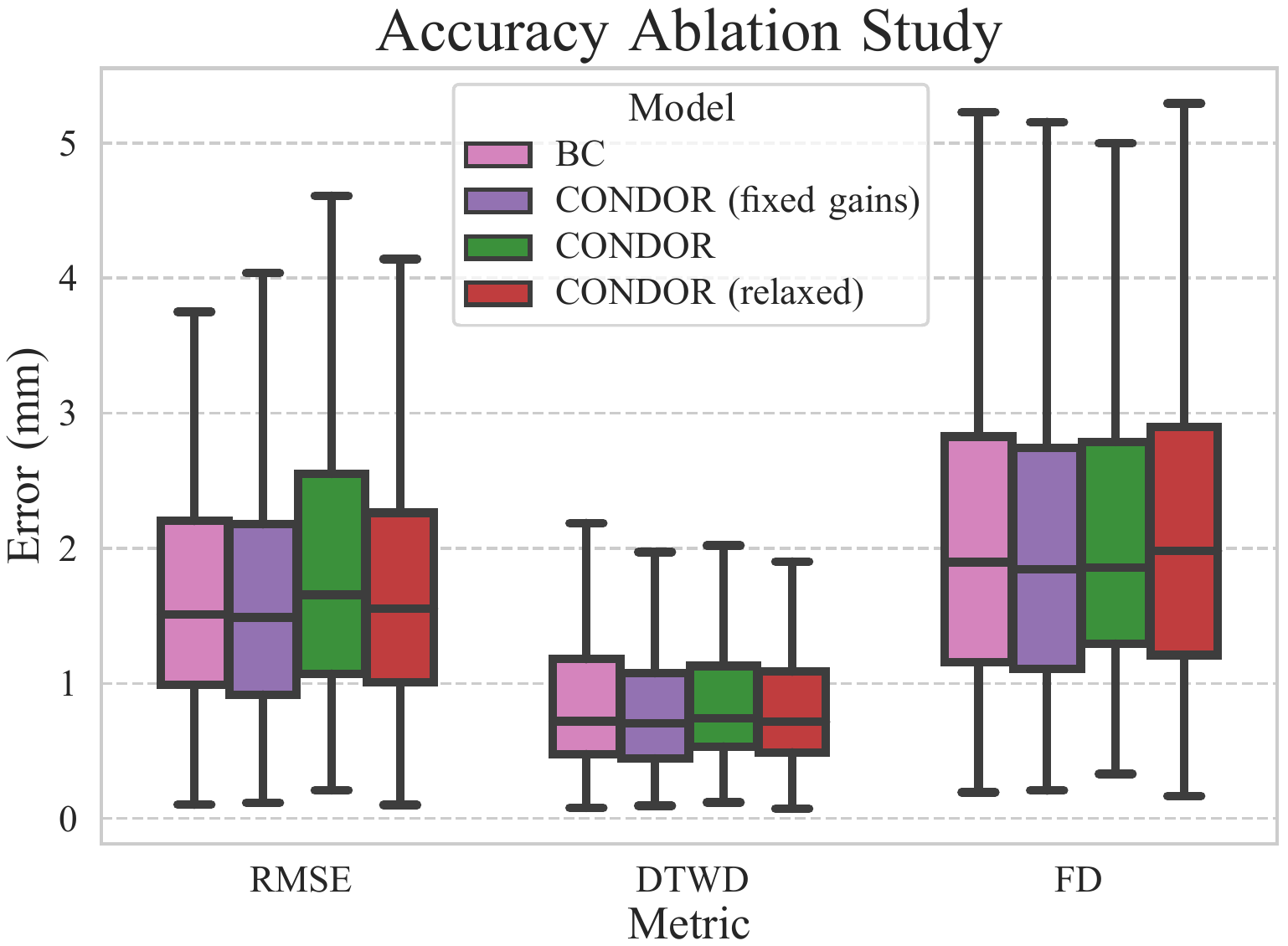}
    \caption{Accuracy comparison of different variations of CONDOR. Each box plot summarizes performance over the 30 motions of the LASA dataset.}
    \label{fig:accuracy_ablation}
\end{figure}

\subsection{Ablation Study}\label{ref:lasa_ablation}
To better understand CONDOR and the relevance of its different parts, we perform an ablation study where we compare four variations of this method:
\begin{enumerate}
    \item \textbf{CONDOR:} the base method studied in Section \ref{sec:lasa_validation}.
    \item \textbf{CONDOR (relaxed):} the contrastive loss for stability is replaced with the triplet loss. This variation is presented to observe the importance of minimizing the exact loss presented in \eqref{eq:contrastive_loss} or whether it is enough to enforce this type of structure in the latent space of the NN to obtain stable motions.
    \item \textbf{CONDOR (fixed gains):} as explained in Section \ref{sec:adaptive_gains}, having adaptive gains in the latent dynamical system described in \eqref{eq:DSy} should help obtaining more flexible motions. Therefore, this model, with fixed gains, is studied to observe the relevance and effect of using adaptive gains. 
    \item \textbf{Behavioral Cloning (BC):} the stability loss is removed and only the BC loss is employed to learn motions. This model is used to study the effect that the stability loss can have on the accuracy of the learned motions. To observe the behavior of BC, we refer the reader to Fig. \ref{fig:bc_example_DS}.
\end{enumerate}

Fig. \ref{fig:quali_relaxed_fixed} showcases examples of motions learned with both CONDOR (relaxed) and CONDOR (fixed gains). In both cases, the motions display accuracy and stability. However, these models differ in their generalization. CONDOR (relaxed) has a generalization behavior similar to the one of CONDOR shown in Fig. \ref{fig:LASA_examples}. In these cases, the regions of the state space without demonstrations exhibit a trend that resembles the one observed in the demonstrations. In contrast, the generalization of CONDOR (fixed gains) does not follow this trend as closely. For example, within certain regions, the velocity of the motions decreases, and as they approach the demonstrations, their direction becomes nearly orthogonal, indicating a discrepancy between the generalized behavior and the pattern presented in the demonstrations.

\begin{figure}[t]
    \centering
    \includegraphics[width=\columnwidth]{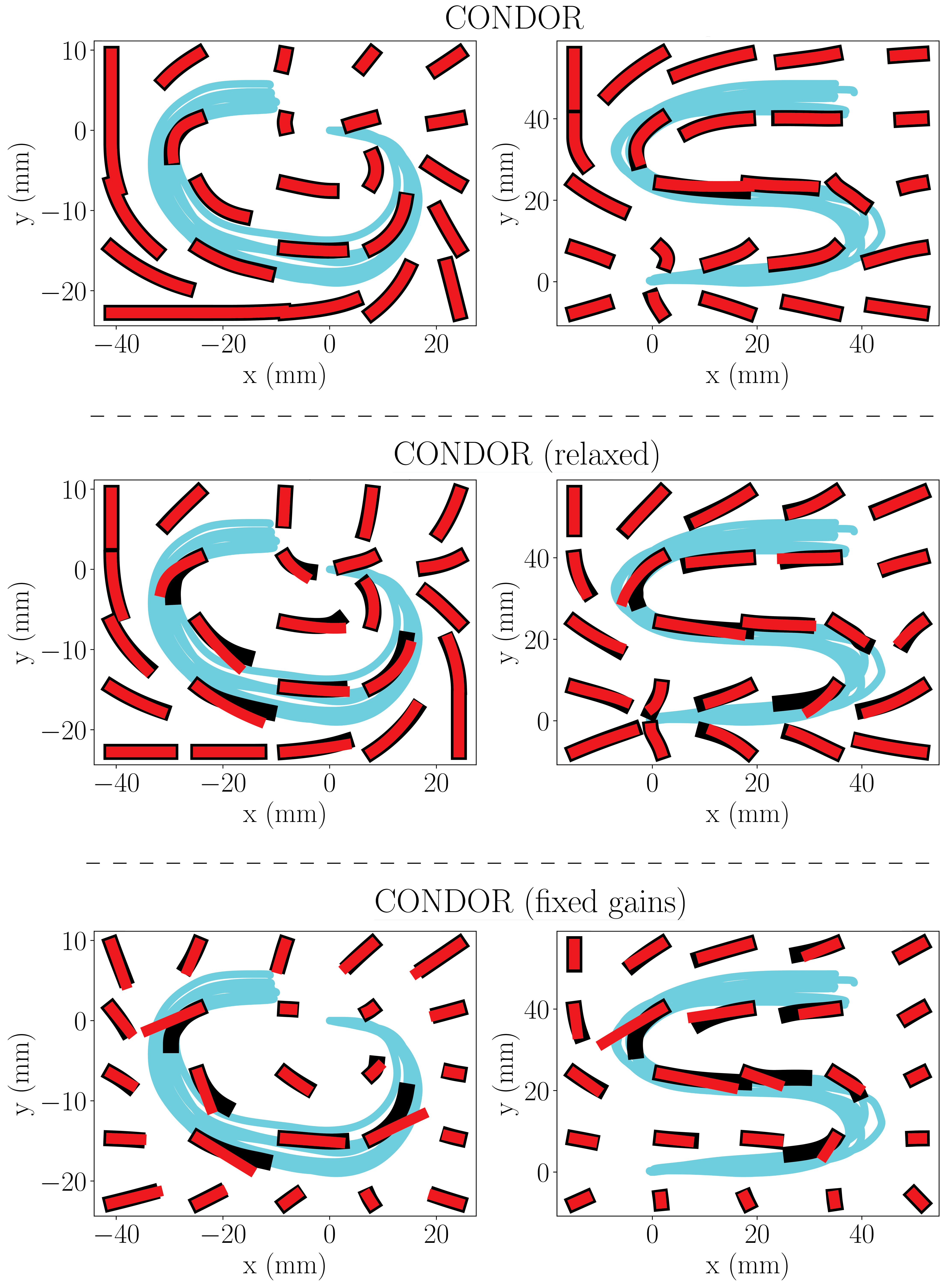}
    \caption{Dynamical systems mismatch comparison. Blue curves represent the demonstrations, black curves represent trajectories generated using $f_{\theta}^{\mathcal{T}}$ of the dynamical system, and red curves represent trajectories generated using $f^{\mathcal{L}}$ of the dynamical system. The black and red trajectories were obtained by integrating the dynamical system through 80 time steps.}
    \label{fig:diffeo_mismatch_quali}
\end{figure}
\subsubsection{Accuracy}
In this subsection, we compare the accuracy of the different variations of CONDOR (see Fig. \ref{fig:accuracy_ablation}). Intuitively, BC should perform better than any variation of CONDOR, since it only optimizes the BC loss. In contrast, the other variations also optimize the stability loss, which could harm/limit the minimization of the BC loss. Consequently, BC is the lower bound for the accuracy performance, i.e., best case scenario, a variation of CONDOR performs as well as BC does.

In Fig. \ref{fig:accuracy_ablation}, we observe that the accuracy performance of all of the variations of CONDOR is very similar, including the BC case. This result shows that CONDOR, and its variations, is able to effectively minimize the BC and stability loss together without harming the accuracy performance of the learned motions.

\subsubsection{Stability}
Table \ref{tab:stability_1} shows the results of stability tests on the presented methods. We can observe that when stability is not enforced (i.e., Behavioral Cloning), the percentage of unsuccessful trajectories is significant, being larger than one-third of the total amount of trajectories. In contrast, when stability is enforced using adaptive gains, the system achieves perfect performance (i.e., every trajectory reaches the goal), as it is observed with the results of CONDOR and CONDOR (relaxed). Interestingly, this result also shows that the relaxed variation of CONDOR can be employed for achieving stable motions without having a loss in performance. In contrast, when fixed gains are employed, although the percentage of unsuccessful trajectories is very low ($<0.01\%$), the performance degrades. This result suggests that the stability loss is not being as effectively minimized as when the gains are adaptive.
\begin{table}[t]
\centering
\caption{Percentage of unsuccessful trajectories over the LASA dataset ($L=2000$, $P=1225$, $\epsilon=1$mm).}
\label{tab:stability_1}
\begin{tabular}{llll}
\hline
\thead{Behavioral \\ Cloning} & \thead{CONDOR \\ (fixed gains)} & \thead{CONDOR \\ (relaxed)}  & \thead{CONDOR \\ \hphantom{h}} \\ \hline
36.4653\%          & 0.0054\%     & \textbf{0.0000}\%         & \textbf{0.0000}\% \\ \hline
\end{tabular}
\end{table}

\subsubsection{Dynamical Systems Mismatch}\label{sec:diffeo_mismatch}
So far, we observed that every variation of CONDOR that minimizes the stabilization loss is able to learn accurate and stable motions. The case of CONDOR (fixed gains) showed a slightly worse stability performance and poorer generalization capabilities than CONDOR and CONDOR (relaxed). However, CONDOR has not shown to be clearly superior to its variations, especially in the CONDOR (relaxed) case. 

Since CONDOR (relaxed) approximates the loss that minimizes the distance between $y_{t}^{\mathcal{L}}$ and $y_{t}^{\mathcal{T}}$, the trajectories that it obtains with $f^{\mathcal{L}}$ and $f_{\theta}^{\mathcal{T}}$ in the latent space should diverge faster than the ones generated with CONDOR. To investigate this idea, we evaluate the optimization of this loss by separately simulating $f^{\mathcal{T}}_{\theta}$ and $f^{\mathcal{L}}$ when starting from the same initial conditions. If the stabilization loss is perfectly minimized, these simulations should yield the same trajectories when mapping the evolution of $f^{\mathcal{L}}$ to task space\footnote{Trajectories from $f^{\mathcal{L}}$ with known initial conditions in $\mathcal{T}$ (hence, $y^{\mathcal{L}}_{0}=\psi_{\theta}(x_{0})$), can be mapped to $\mathcal{T}$ by recursively applying $x_{t+1}=x_{t} + \phi_{\theta}(y^{\mathcal{L}}_{t})\Delta t$.}; otherwise, they should diverge from each other. 

Fig. \ref{fig:diffeo_mismatch_quali} presents motions learned using the different variations of CONDOR and shows motions generated in task space when following $f_{\theta}^{\mathcal{T}}$ and $f^{\mathcal{L}}$. We can observe that CONDOR performs well in the complete state space, where, for most trajectories, it is not possible to detect a difference between the results obtained using $f^{\mathcal{L}}$ and $f_{\theta}^{\mathcal{T}}$. In contrast, we can observe that, for the other cases, trajectories diverge more pronouncedly. Interestingly, the divergent trajectories seem to overlap with the regions where demonstrations are provided. This suggests the stabilization loss is not properly minimized in this region, indicating that these variations of CONDOR struggle to find good solutions in the regions of the state space where the imitation and stabilization losses are optimized together, i.e., in the demonstrations. Finally, it is also possible to observe that CONDOR (relaxed) obtains trajectories that are slightly more similar to the ones obtained with CONDOR (fixed gains), although it is not conclusive. 

\begin{figure}[t]
    \centering
    \includegraphics[width=0.8\columnwidth]{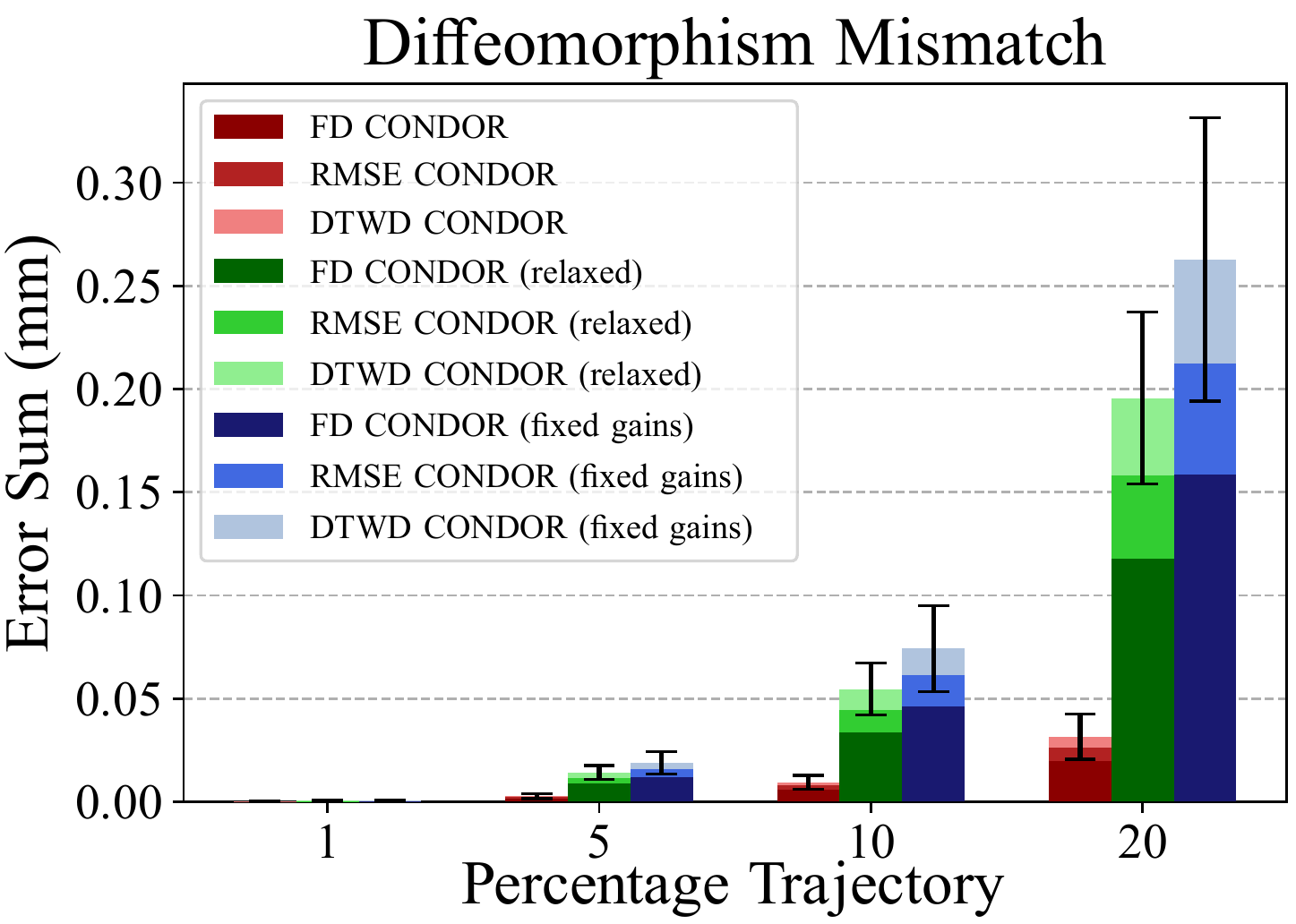}
    \caption{Trajectory mismatch error. Results are presented as a function of the length of a trajectory with respect to the complete length of the demonstrated trajectories, corresponding to 1000 transitions. 100 trajectories in the task and latent spaces were simulated, whose initial positions were uniformly distributed in the motion's state space. The results show the mean and half of the standard deviation of the error computed with these simulations.}
    \label{fig:diffeo_mismatch_quanti}
\end{figure}
Quantitatively, we can analyze this trajectory difference by computing the accumulated error between the trajectories generated using both dynamical systems. Fig. \ref{fig:diffeo_mismatch_quanti} shows this error as a function of the trajectory length. As expected, this error grows for the dynamical systems as a function of their length. However, CONDOR obtains a significantly lower error than its variations. Furthermore, Fig. \ref{fig:diffeo_mismatch_quanti} clearly shows that CONDOR (relaxed) outperforms CONDOR (fixed gains). Finally, since this accumulated error is a consequence of how well the stability loss is minimized, these results might explain why the stability performance of CONDOR (fixed gains) is not perfect, i.e., this variation is not able to successfully minimize the stability loss in the complete state space.

As a final remark, we can note that generating trajectories in the latent space and then mapping them to task space can have other applications, such as predicting future states efficiently and employing them, for instance, in Model Predictive Control frameworks. It is considerably faster to generate trajectories in the latent space than using the complete DNN architecture to compute them in task space, since the number of parameters and layers required to do so is smaller. Then, once the trajectory is generated in the latent space, it can be mapped to task space as one batch in one forward pass.

\subsection{LAIR dataset validation: second-order 2-dimensional motions}\label{sec:lair}
\begin{figure}[t]
    \centering
    \includegraphics[width=\columnwidth]{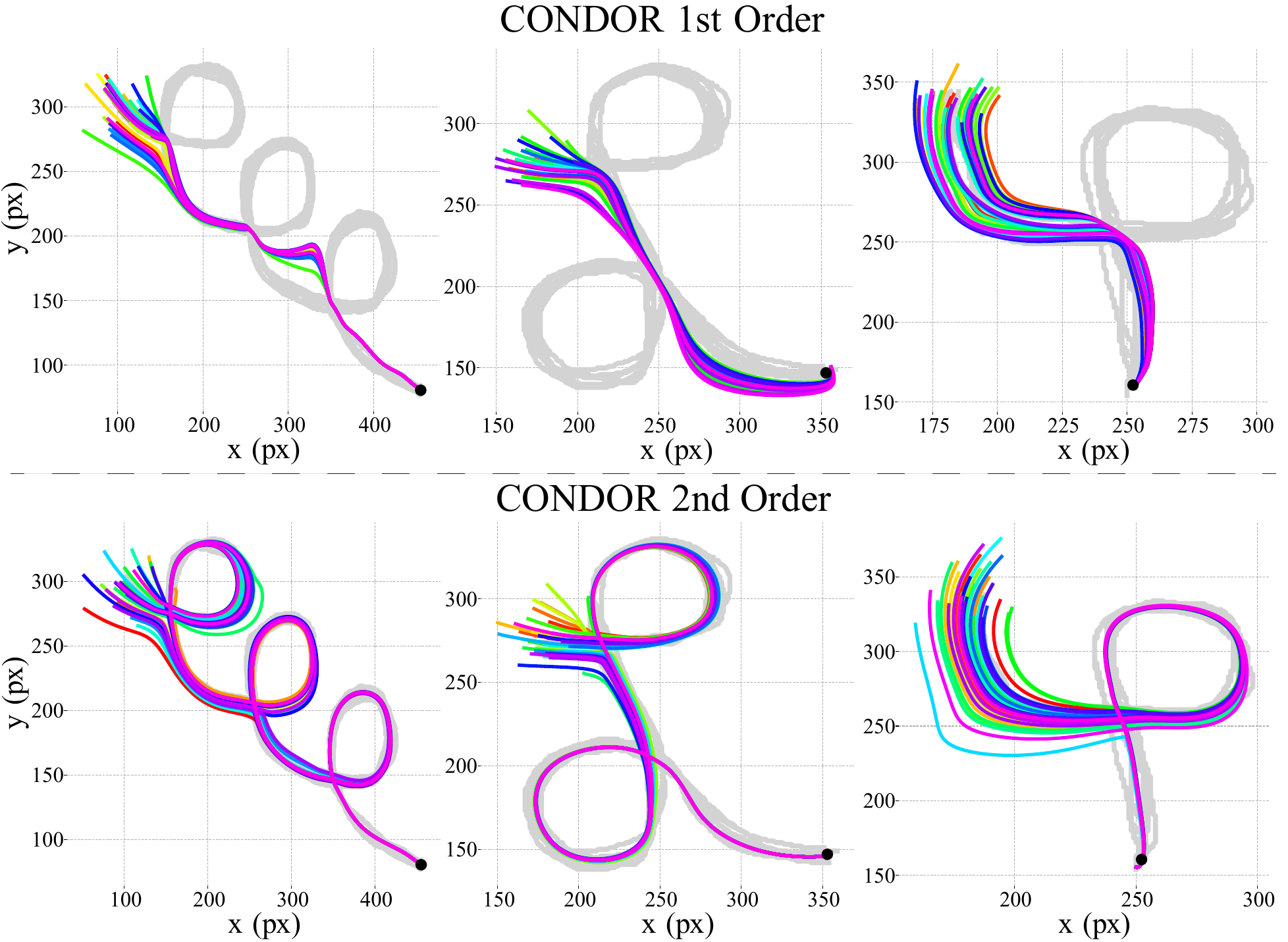}
    \caption{Motions modeled using CONDOR with first order and second order systems. The shapes in grey correspond to the demonstrations. The colored curves correspond to different instances of trajectories generated when starting from different initial states. Every trajectory was initialized with zero velocity and the initial positions were obtained by sampling from a Gaussian distribution around the initial positions observed in the demonstrations. 36 trajectories were sampled per plot.}
    \label{fig:second_order_quali}
\end{figure}
In this section, we introduce the \emph{LAIR handwriting dataset}. The objective is to test the accuracy and stability performance of the proposed method for second-order motions, where the state comprises both position and velocity. This dataset contains 10 human handwriting motions collected using a mouse interface on a PC. The state here is 4-dimensional, encompassing a 2-dimensional position and velocity, and the output of $f^{\mathcal{T}}_{\theta}$ is the desired acceleration. The dataset's shapes present several position intersections that have been designed to require, at least, second-order systems to model them. This dataset is employed to test the scalability of the proposed method in terms of the order of the motion.

Unlike the LASA dataset, the LAIR dataset contains raw demonstrations without any type of postprocessing. Hence, the ending points of the demonstrated trajectories might not always coincide exactly. To account for this, the goal of a motion is computed by taking the mean between these ending points.

\subsubsection{Accuracy}
Fig. \ref{fig:second_order_quali} shows three examples of motions of the LAIR dataset. These motions can only be modeled using a dynamical system of, at least, second order. First-order systems only employ position information to generate a trajectory; hence, visiting the same position two times will generate an ambiguity for the learning algorithm. This makes the learned system collapse to a solution that lies in between the multiple demonstrated options. Therefore, we observe that first-order systems with CONDOR are not able to appropriately model the shown motions.

In contrast, we observe that second-order systems are able to appropriately capture the dynamics of the demonstrated motions and execute them as they were intended. However, some trajectories (especially those coming from the tail of the initial-state sampling distribution) do not go through the first intersection, since they start from a position that, given its distance from the initial states of the demonstrations, directly follows the trend of the motion after this intersection. If this is a limitation for a specific application, providing demonstrations in those regions would make the system behave as expected. Finally, another interesting feature of these motions, is that the different trajectories, eventually, seem to collapse to the same position, overlapping with each other. This comes as an artifact when incorporating the stability loss, where the systems find these solutions to ensure stability.

Quantitatively, the same conclusions can be drawn when observing Fig. \ref{fig:box_plot_second_order}. This figure presents the results of the accuracy of both CONDOR variations under the same metrics employed in Section \ref{sec:accuracy_description}. As expected, the second-order systems outperform the first-order systems by a large margin.
\begin{figure}[t]
    \centering
    \includegraphics[width=0.8\columnwidth]{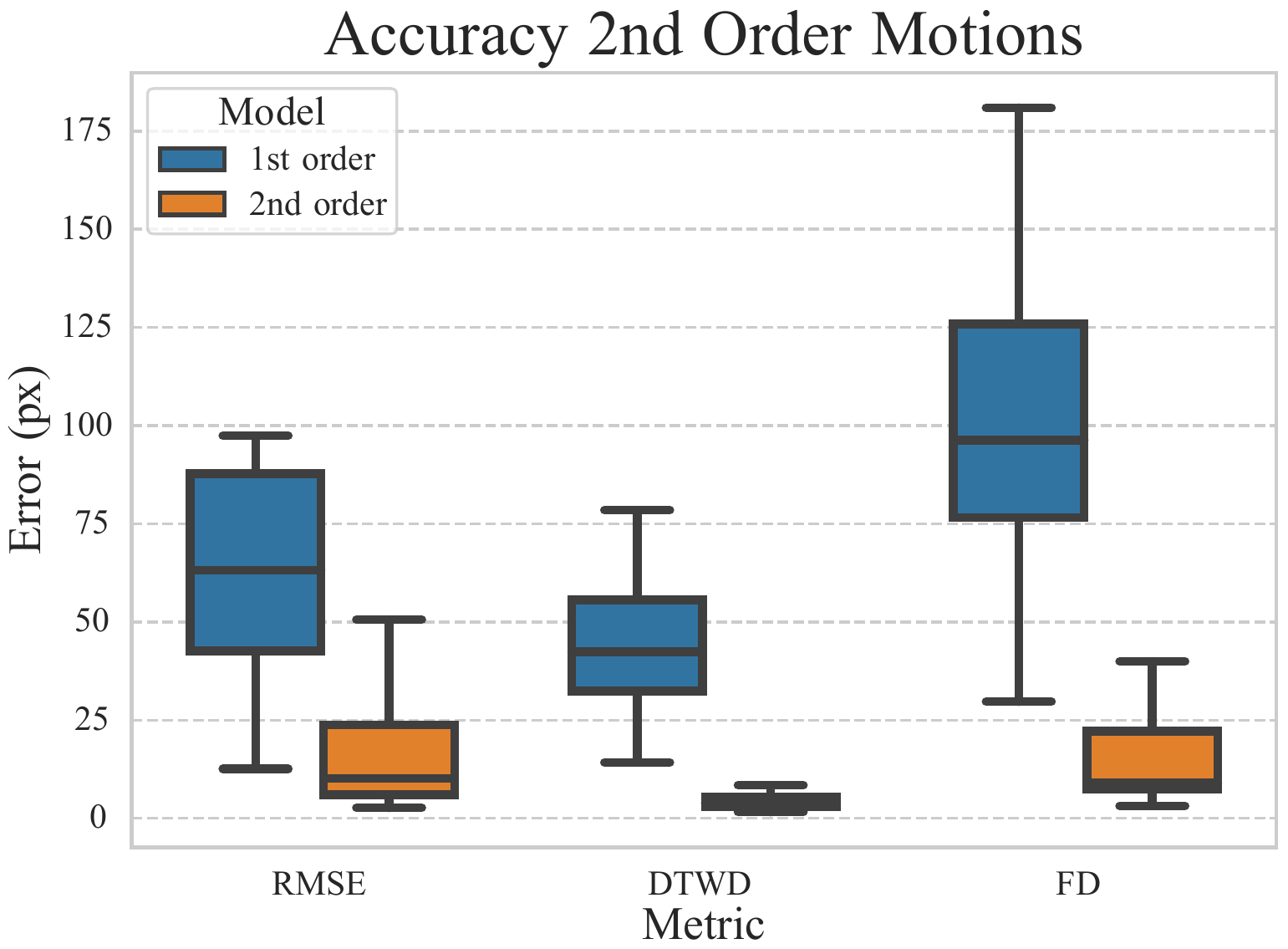}
    \caption{Accuracy comparison of CONDOR when modeling motions using first order and second order systems. Each box plot summarizes performance over the 10 motions of the LAIR dataset.}
    \label{fig:box_plot_second_order}
\end{figure}
\subsubsection{Stability}
\begin{table}[t]
\centering
\caption{Percentage of successful trajectories over the LAIR dataset ($L=2000$, $P=1225$, $\epsilon=10$px).}
\label{tab:stability_2}
\begin{tabular}{ll}
\hline
CONDOR (1st order) & CONDOR (2nd order) \\ \hline
9.3388\%     & \textbf{0.0000}\%                \\ \hline
\end{tabular}
\end{table}
Finally, we study the stability of the motions generated with CONDOR over the LAIR dataset when using first-order and second-order systems. Table \ref{tab:stability_2} shows that when using first-order systems, CONDOR struggles to generate stable motions with second-order demonstrations. For instance, when a demonstration has a \emph{loop}, the optimization of the DNN might not find a proper solution, since trajectories inside the loop do not have a way of reaching a region outside the loop without ignoring the demonstrations. In contrast, CONDOR with second-order systems is always able to learn stable motions.

\section{Real-World Experiments}\label{sec:real_world}

\begin{figure*}[t]
\centering
\subfloat[Hammer hanging.]{\includegraphics[width=0.32\linewidth,valign=t]{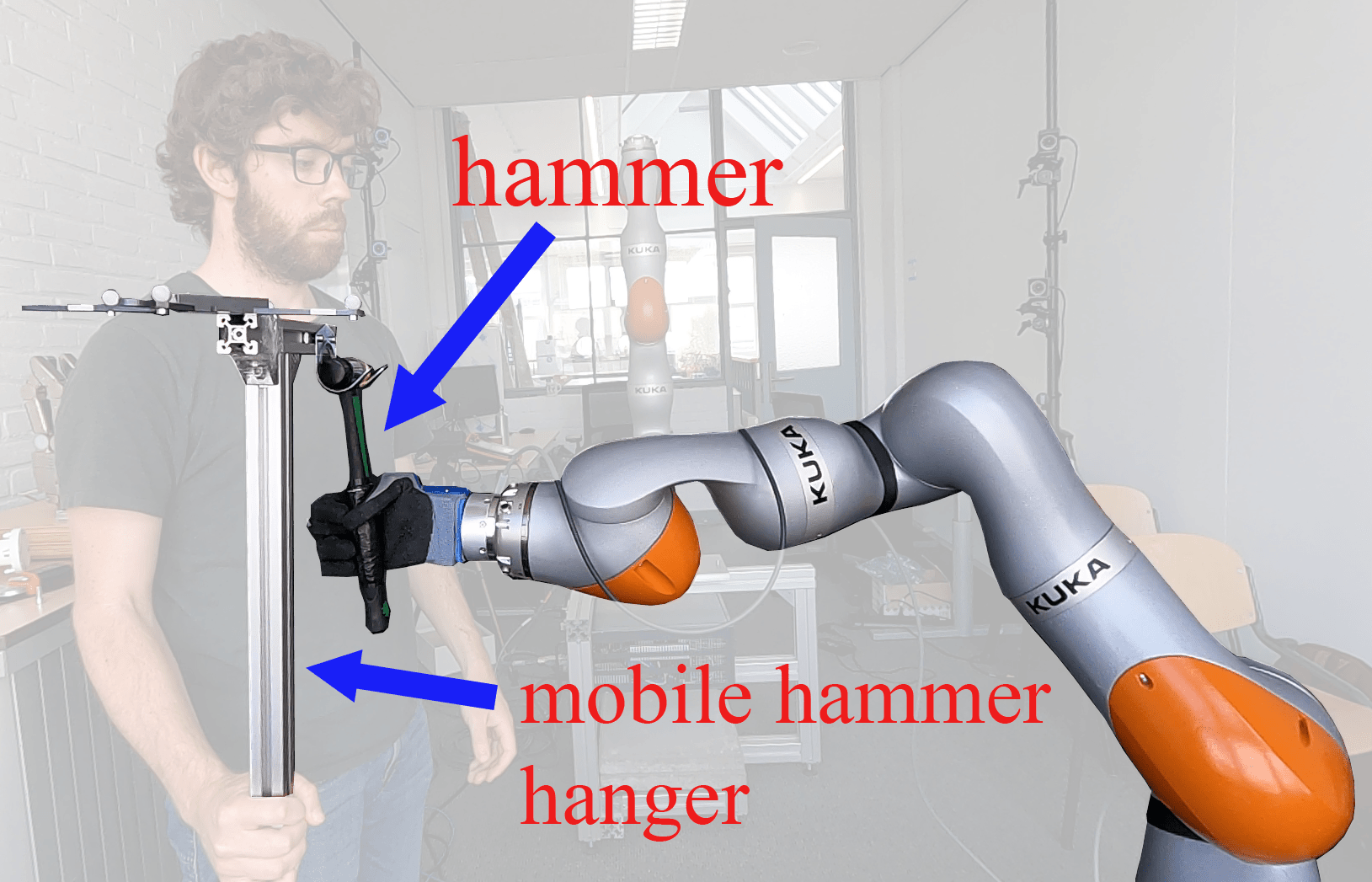}%
\vphantom{\includegraphics[width=0.32\linewidth,valign=t]{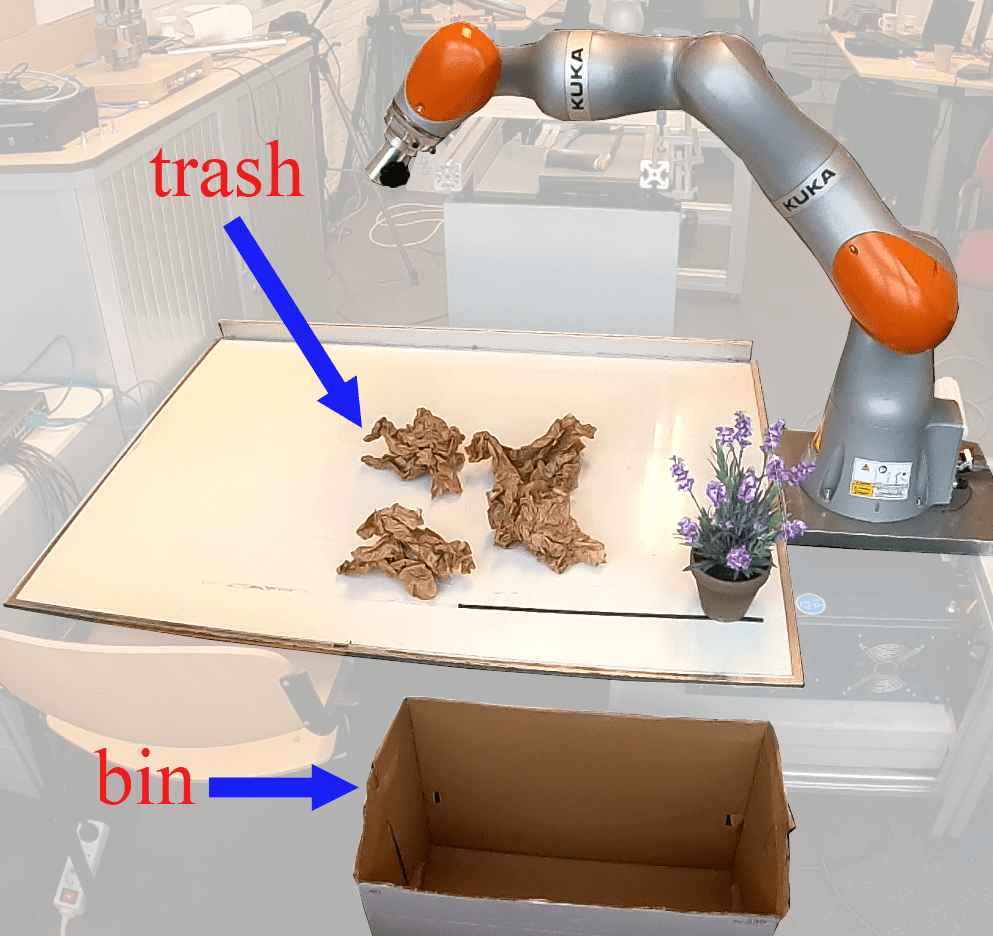}}%
\label{fig:hanging_setup}%
}
\quad
\subfloat[Writing number two.]{\includegraphics[width=0.32\linewidth,valign=t]{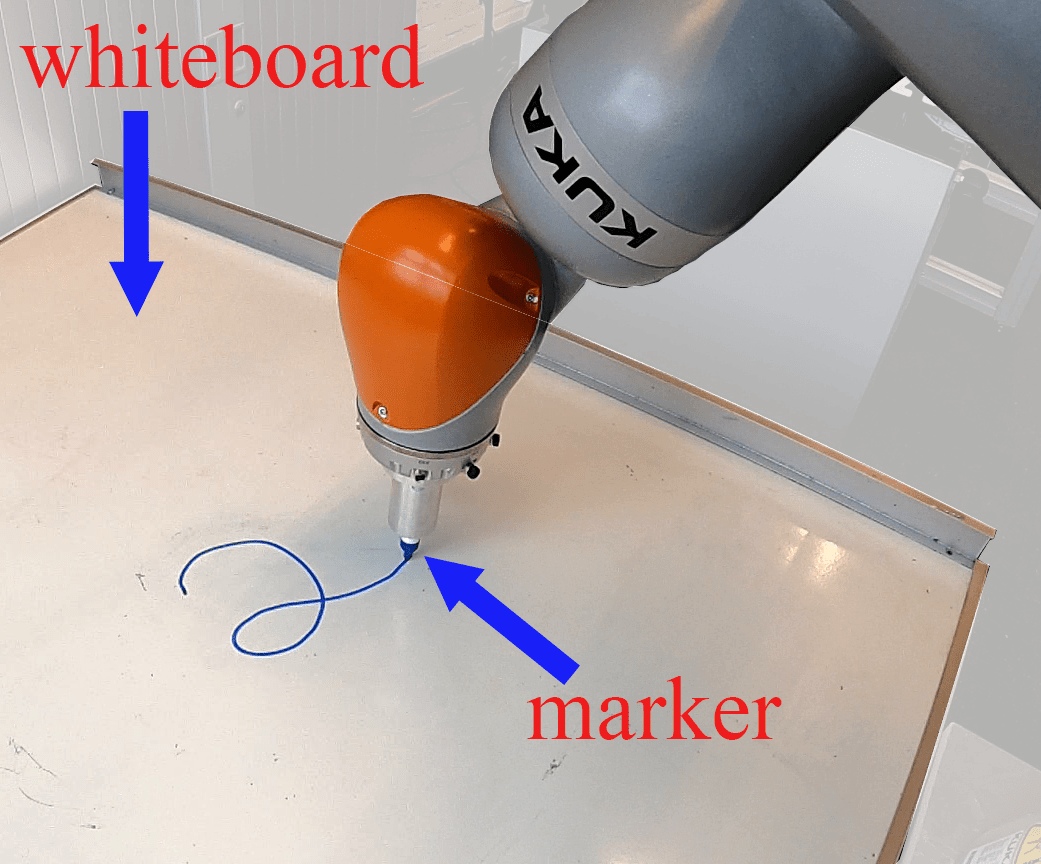}%
\vphantom{\includegraphics[width=0.32\linewidth,valign=t]{figures/cleaning_setup.png}}%
\label{fig:writing_setup}%
}
\quad
\subfloat[Table cleaning.]{\includegraphics[width=0.32\linewidth,valign=t]{figures/cleaning_setup.png}%
\label{fig:cleaning_setup}%
}
\caption{Setup of real-world experiments.}
\label{fig:realworld_setups}
\end{figure*}

\begin{table}[t]
\centering
\begin{threeparttable}
\caption{Characteristics of the real-world experiments.}
\label{tab:param_realworld}
\begin{tabular}{lcllll}
\hline
Task                                         & \multicolumn{1}{l}{\begin{tabular}[c]{@{}l@{}}Dim\\ state\end{tabular}} & \multicolumn{1}{l}{\begin{tabular}[c]{@{}l@{}}Order\\ motion\end{tabular}} & Control                                                            & Data collection & \begin{tabular}[c]{@{}l@{}} \#\\ demos\end{tabular}                                           \\ \hline
\begin{tabular}[c]{@{}l@{}}Hammer \\ hanging\end{tabular} & 3                                & \multicolumn{1}{c}{1}              & \begin{tabular}[c]{@{}l@{}}online,\\ end eff.\end{tabular}  & motion capture  &  10                                                    \\[0.4cm]
\multicolumn{1}{l}{\begin{tabular}[c]{@{}l@{}}Writing\\ two\end{tabular}}                                               & 4                                & \multicolumn{1}{c}{2}              & \begin{tabular}[c]{@{}l@{}}offline,\\ end eff.\end{tabular} & \begin{tabular}[c]{@{}l@{}}computer mouse\\ interface\end{tabular}  & 6 \\[0.4cm]
\begin{tabular}[c]{@{}l@{}}Table\\ cleaning\end{tabular}  & 6                                & \multicolumn{1}{c}{1}              & \begin{tabular}[c]{@{}l@{}}online,\\ joints\end{tabular}       & \begin{tabular}[c]{@{}l@{}}kinesthetic\\ teaching\end{tabular}    &  2$^{*}$                                              \\ \hline
\end{tabular}
\begin{tablenotes}
\footnotesize
\item $^{*}$ Two motion models were learned, and one demonstration was used per model.
\end{tablenotes}
\end{threeparttable}
\end{table}

To validate the proposed framework in more realistic scenarios, we design three real-world experiments using a 7-DoF KUKA iiwa manipulator: 1) hammer hanging, 2) writing the number two, and 3) cleaning a table (see Fig. \ref{fig:realworld_setups}). Throughout these experiments, four important characteristics of the learning problem are changed: 1) dimensionality of the motion, 2) order of the motion, 3) control strategy, and 4) data collection method. These characteristics define different Imitation Learning scenarios that can be found in real-world robotic problems. Hence, by testing CONDOR in these scenarios we aim to show the applicability, flexibility, and robustness of our method. Furthermore, if we compare these scenarios with the simulated ones studied in the previous sections, we can observe that our method is not restricted to 2-dimensional motions only and that it can also work in higher-dimensional problems. Table \ref{tab:param_realworld} shows a summary of the real-world experiments, which are explained in detail in the following subsections.

Similarly to the LAIR dataset, the demonstrations are not postprocessed in these experiments. Hence, in this section, the goal of the motions is also computed by taking the mean between the ending points of each demonstration.

\subsection{Hammer hanging: First-order 3D motions}
This experiment consists of learning to control the end-effector's position of a robot such that it hangs a hammer (see Fig. \ref{fig:hanging_setup}), allowing us to test the behavior of CONDOR for first-order 3-dimensional motions. This problem is interesting since it shows that implicit knowledge, that otherwise requires modeling, can be transferred to the robot via human demonstrations. In this case, this knowledge includes information about the geometry of the hammer and the hanger that is required to hang the hammer.

\begin{figure}[t]
    \centering
    \includegraphics[width=0.6\columnwidth]{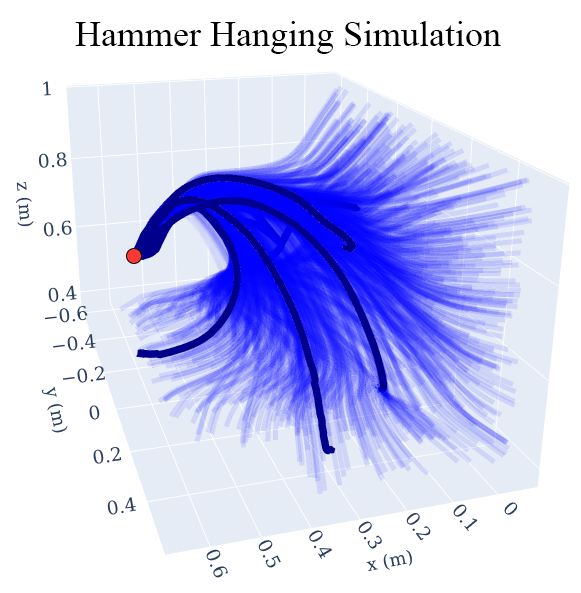}
    \caption{The blue trajectories represent the learned model's evolution of the robot end effector's position when starting from different initial conditions. The larger and darker trajectories correspond to demonstrations. Some demonstrations are occluded and others were removed for visualization purposes. The red point corresponds to the goal.}
    \label{fig:hammer_flow}
\end{figure}

\subsubsection{Control}
We employ the online control strategy depicted in Fig. \ref{fig:online_control}, which allows the robot to be reactive to perturbations and adjust its motion \emph{on the fly} if the environment changes. Hence, at every time step, the robot obtains its position with respect to the goal (i.e., the hanger) and sends an end effector's velocity request to a low-level controller. For details regarding the low-level controller, the reader is referred to Appendix \ref{sec:task_space_control_online}.

\subsubsection{Demonstrations}
We used a motion capture system to collect demonstrations. The demonstrator had to wear a glove whose position was tracked by the tracking system. We recorded 10 demonstrations and used them to train CONDOR. 

This approach has the advantage of it being comfortable for the human, since it does not require the human to adjust to any specific interface nor interact with the robot, which can require training. Nevertheless, since the robot embodiment is not being employed to collect the demonstrations, there is no guarantee that the collected motions will feasible for the robot to execute. Therefore, it is necessary to record motions that can be executed by the robot, which, depending on the problem, might require knowledge about the robotic platform.

\subsubsection{Moving goal}
To test the reactive capabilities of this approach, and the generalization properties of motions modeled as dynamical systems, we made this problem more challenging by making the hanger movable. To achieve this, we added tracking markers to the top of the hanger and fed the hanger's position to CONDOR in real time. Consequently, while the robot was executing the hanging motion, the hanger could be displaced and the robot had to react to these changes in the environment. 

Notably, no extra data is required to achieve this, since the motion of CONDOR is computed as a function of the relative position of the robot w.r.t. the goal. Hence, by displacing the goal, the position of the end effector with respect to the hanger changes, making CONDOR provide a velocity request according to this new position.

\subsubsection{Results}
Fig. \ref{fig:hammer_flow} shows a 3D plot with 1250 simulated trajectories generated with CONDOR when starting from different initial positions. We can observe that all of the trajectories reach the goal while following the shape in the demonstrations. The performance of this model on the real robot can be observed in the attached video.

\subsection{Writing: Second-order 2D motions}
We also tested CONDOR in a writing scenario (see Fig. \ref{fig:writing_setup}). The objective is to control the robot's end effector to write the number two on a whiteboard. To write the number two, it is necessary to use second-order motions, since this character has one intersection. Therefore, in this experiment, we aim to validate the ability of CONDOR for modeling second-order motions. Finally, note that for writing it is only necessary for the robot to move in a 2-dimensional plane; however, since the motion is of second order, the state space of the robot is 4-dimensional (the same as the motions in the LAIR dataset).

\subsubsection{Control}
We employ the offline control strategy as depicted in Fig. \ref{fig:offline_control}. This approach is suitable for writing since in this task it is important that the trajectory that the robot executes is consistent with the one that CONDOR predicts from the initial state. For instance, if the robot, while executing the motion is perturbed by its interaction with the whiteboard in some direction, it would transition to a state that is not consistent anymore with the character that has been written so far. In an offline control approach this is not very critical, because, given that the reference of the motion is pre-computed, it would make the robot move back to a state that is consistent with the motion that is being written. For details regarding the low-level controller, the reader is referred to Appendix \ref{sec:task_space_control_offline}.

\subsubsection{Demonstrations}
The same PC mouse interface developed to collect the LAIR dataset is employed here. 6 demonstrations were collected and used to train CONDOR. 

\subsubsection{Results}
The simulated results of this experiment follow the same behavior as the ones presented in Section \ref{sec:lair}. To observe its behavior on the real robot, the reader is referred to the attached video.

\subsection{Table cleaning: First-order 6D motions}
Finally, we test CONDOR in a cleaning task. The objective is to use the robot's arm to push garbage, which is on top of a table, to a trash bin. Differently from the other scenarios, in this case, the robot's joint space is directly controlled with CONDOR. Hence, we learn a 6-dimensional motion. Note that the robot has 7 degrees of freedom, but we keep the last joint fixed as it has no influence on the task. 

Since the motions learned by CONDOR can be used as primitives of a more complex motion, in this experiment we highlight this capability by learning two motions that are sequenced together to generate the complete cleaning behavior. Each motion is trained with only one demonstration. 

This scenario allows us to test two features of our method: 1) its behavior in a higher-dimensional space (6D), and 2) its capability to learn motions from only one demonstration.

\subsubsection{Control}
Similarly to the hanging hammer experiment, we use the online control strategy (Fig. \ref{fig:online_control}). Differently than before, in this case, the joint space of the robot is directly controlled with CONDOR, i.e., a reference velocity for the joints is provided to the low-level controller. Joint-space control is suitable for this task because the configuration of the robot is important for completing the task successfully since its body is used to push the trash. For details regarding the low-level controller, the reader is referred to Appendix \ref{sec:joint_space_control}.

\begin{figure}[t]
    \centering
    \includegraphics[width=1.0\columnwidth]{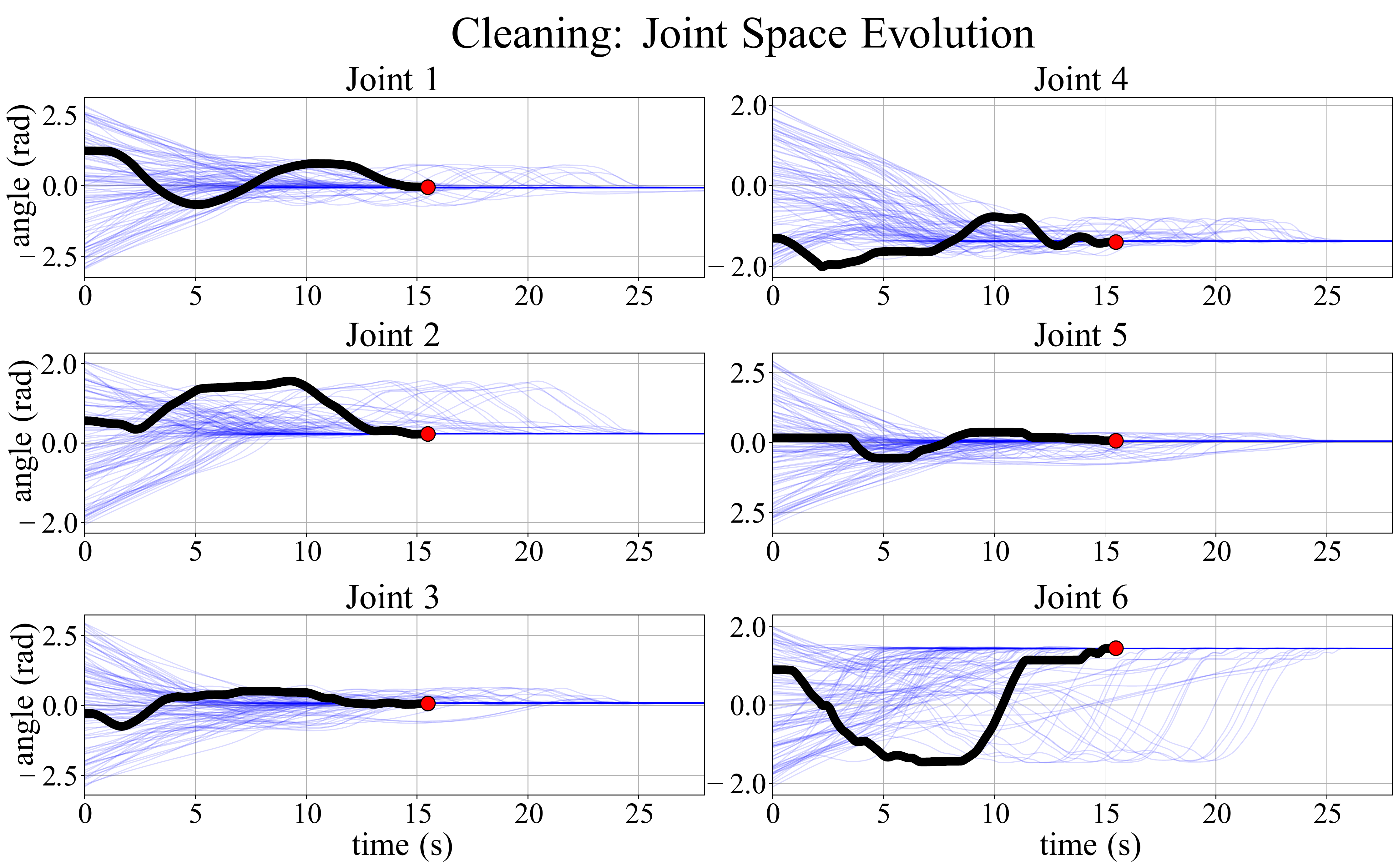}
    \caption{Simulated trajectories, as a function of time, of the second motion of the cleaning task. Blue trajectories correspond to evaluations of the model under different initial conditions and the black trajectory corresponds to the demonstration. The red point is the goal.}
    \label{fig:cleaning_flow}
\end{figure}

\subsubsection{Demonstrations}
For this experiment, kinesthetic teaching was used to collect demonstrations. This approach consists of collecting demonstrations by physically interacting with the robot and guiding it along the desired trajectory. To make this task easier, the gravitational forces of the robot were compensated such that it would not move unless the human interacted with it.

\subsubsection{Results}
Fig. \ref{fig:cleaning_flow} presents simulated results of the second cleaning primitive learned in this experiment. Since the motion is 6-dimensional and, hence, it is very challenging to visualize in one plot, we use six different plots to separately show the evolution of each state dimension as a function of time. 

In this case, we simulate 100 trajectories using CONDOR, since more make the plots difficult to analyze. From them, we observe that as time increases, every trajectory eventually reaches the goal. Note that, given that their initial states are random, they can start further away or closer to the goal than the demonstration; therefore, it might take them a longer/shorter time to reach the goal. Lastly, we can observe that the demonstrated trajectory and some simulations, either overlap or have the same shape with a phase shift, which showcases that the demonstrated behavior is captured by CONDOR.

The reader is referred to the attached video to observe the behavior of the cleaning primitives on the real robot.

\section{Extending CONDOR}\label{sec:extensions}
One of the advantages of our proposed framework is that we can extend it to address more complex problems. Therefore, there are interesting areas of research that can be studied with CONDOR. In this section, we aim to show the steps that we have taken in this direction, which we plan to study deeper in future work. More specifically, we tested two extensions:
\begin{enumerate}
    \item \textbf{Obstacle avoidance:} multiple obstacle avoidance methods have been proposed for motions modeled as dynamical systems, and it is an active field of research \cite{khansarizadeh2012avoidance,saveriano2014distance,huber2019avoidance,huber2022avoiding}. We test CONDOR with one of these extensions \cite{khansarizadeh2012avoidance} and observe that it works properly. However, apart from this validation, we do not provide a contribution to this problem. Hence, the reader is referred to Appendix \ref{sec:obstacle_avoidance} for more details regarding obstacle avoidance in this research. 
    \item \textbf{Multi-motion learning and interpolating:} another interesting field of research is learning multiple motions together in one Neural Network model. This allows interpolating between these motions, generating novel behaviors that are not present in the demonstrations. This can, for instance, reduce the number of human demonstrations required to learn and generalize a problem to a different situation.
\end{enumerate}

In the next subsection, we study the interpolation capabilities of motions learned with CONDOR.

\subsection{Multi-motion Learning and Interpolation}
We aim to provide preliminary results regarding the multi-motion learning capabilities of CONDOR, and its behavior in terms of interpolation and stability. To learn multiple motions in one Neural Network we extend its input with a one-hot code that indicates which motion is selected. For instance, if we learn three motions, we have three codes $[1, 0, 0]$, $[0, 1, 0]$, and $[0, 0, 1]$. Then, each code is used together with a different set of demonstrations to optimize $\ell_{\text{bc}}$. To interpolate between these motions, we select an input code of the DNN that has an intermediate value between the ones of the motions, e.g., $[0.5, 0.5, 0.0]$.

\begin{figure}[t]
    \centering
    \includegraphics[width=0.6\columnwidth]{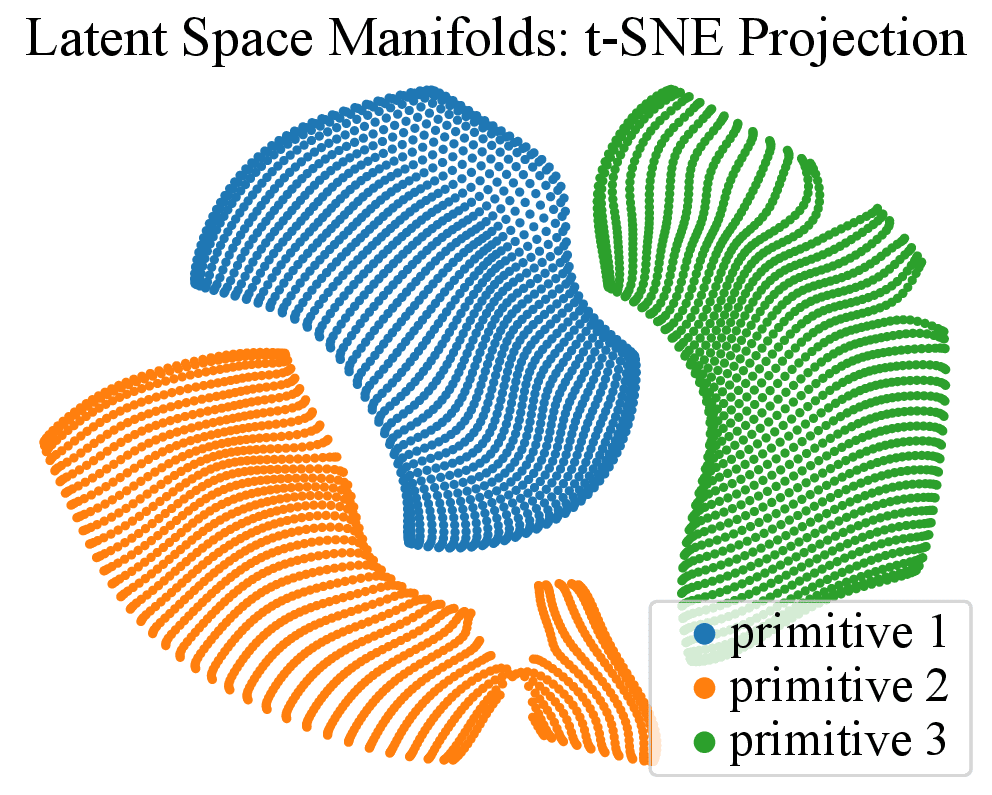}
    \caption{t-SNE projection of the motion manifolds present in the latent space of the DNN.}
    \label{fig:tsne}
\end{figure}
To ensure stability for all of the interpolated motions, we can minimize $\ell_{\text{stable}}$ for each motion and also for the ones in the interpolation space, which should create a bijective mapping between the complete input of the DNN (state and code) and its latent space. Part of this can be observed in Fig. \ref{fig:tsne}, which shows a t-SNE \cite{van2008visualizing} projection of three manifolds corresponding to the mapping of the state space of three motions to the DNN's latent space. Since each motion is mapped to a different region of the latent space, it is possible to move in between these regions to create interpolated motions.

Fig. \ref{fig:interpolation} shows these motions and some examples of their interpolation. We can note that the interpolation works properly, where features of different motions are combined to create novel behaviors. Furthermore, we observe that, as expected, the closer to a motion we interpolate, the more features of this motion the interpolated one showcase. Finally, regarding stability, every motion has zero unsuccessful trajectories with $L=2000$, $P=1225$, and $\epsilon=10$px.
\begin{figure}[t]
    \centering
    \includegraphics[width=\columnwidth]{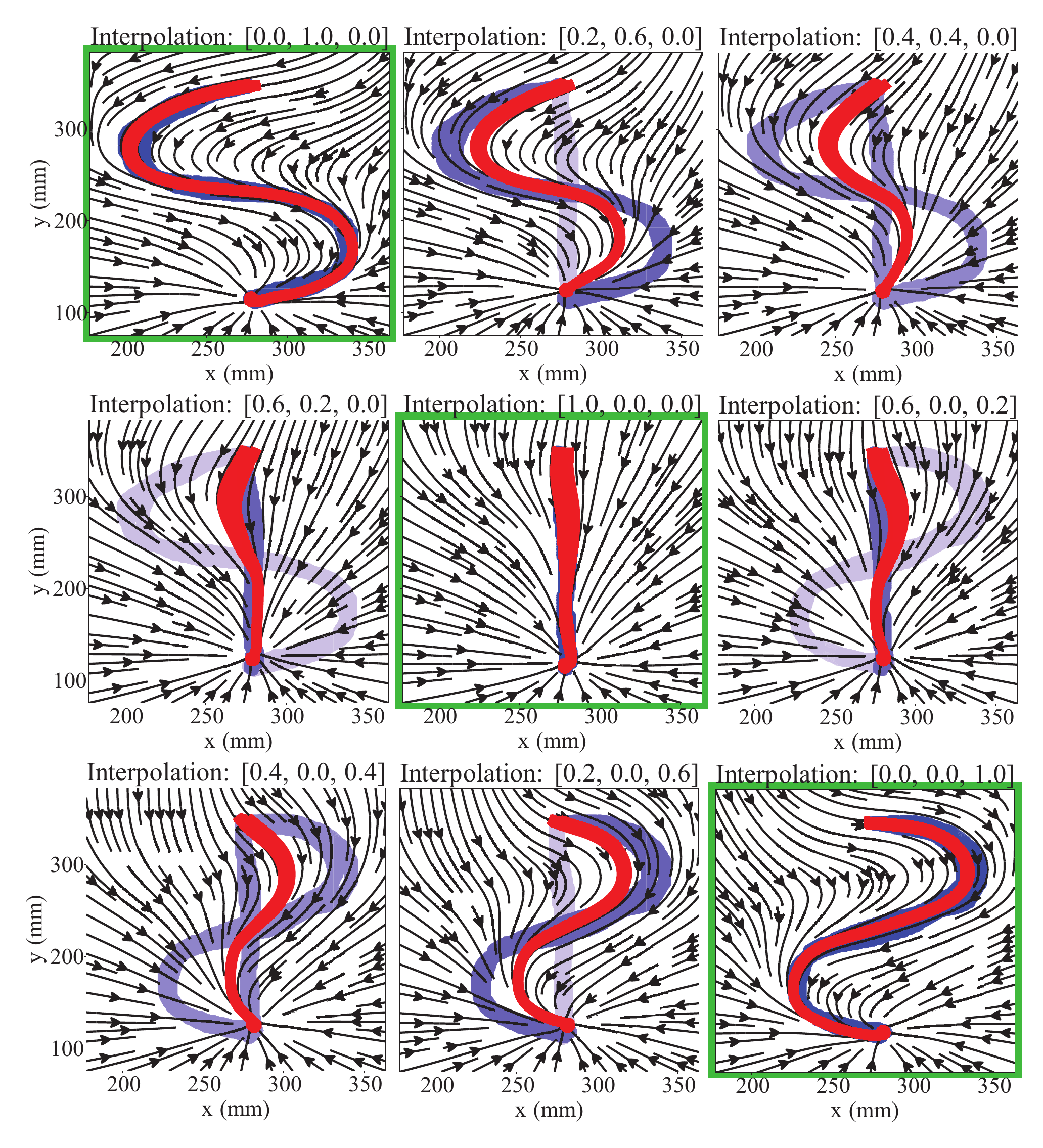}
    \caption{Different motions learned in one DNN. Blue curves correspond to demonstrations, the darker their color, the more they influence each motion. Arrows represent the vector field of the motion, and red curves correspond to simulations of trajectories executed by the model. The figures highlighted in green correspond to the cases where the BC loss is minimized. The remaining figures correspond to interpolated motions. In the title of each plot, we can observe the code provided to the network to generate each motion.}
    \label{fig:interpolation}
\end{figure}

\section{Conclusion}\label{sec:conclusions}
In the context of robotic reaching motions modeled as dynamical systems, we introduce a novel contrastive loss that extends current Imitation Learning frameworks to achieve globally asymptotically stable behaviors. We optimize this loss together with a Behavioral Cloning loss, which, despite its practical limitations due to the covariate shift problem, can achieve state-of-the-art results by minimizing the multi-step loss instead of the single-step loss, as observed in our experiments. Importantly, our stability loss can also be employed with other Imitation Learning approaches, though its effectiveness with other losses remains untested.

Further experiments demonstrate that our framework, CONDOR, can effectively learn stable and accurate motions across various scenarios. These experiments were conducted in both simulated settings and with a real robot. We observed that CONDOR learns successful behaviors in 1) 2-dimensional first-order motions (LASA dataset), 2) 3-dimensional first-order motions (hammer hanging), 3) 4-dimensional second-order motions (LAIR dataset and writing two), and 4) 6-dimensional motions (cleaning table). Lastly, we observe that CONDOR can be extended to learn multiple motions and interpolate between them, allowing it to generate more stable behaviors without requiring more demonstrations. This interesting area of research will be explored further in future work. 

While this paper's findings are promising, they also reveal limitations that inspire other future research directions. Firstly, our method has only been tested on relatively low-dimensional state spaces; its applicability in higher-dimensional spaces remains unexplored. Additionally, we assume that the employed state representations used are minimal, a condition not always met in robotics. For instance, orientation representations often employ non-Euclidean manifolds (e.g., unit quaternions or rotation matrices) that introduce constraints, making the state representations non-minimal \cite{ude2014orientation}. A further assumption is the existence of a low-level controller capable of generating the state transitions requested by CONDOR. While this is reasonable for manipulators, it can be a limiting factor in highly underactuated robots. Finally, CONDOR assumes that a proper state estimation (e.g., robot pose, goal location, or obstacles) is achieved by other modules.

\section*{Acknowledgment}
This research is funded by the Netherlands Organization for Scientific Research project Cognitive Robots for Flexible Agro-Food Technology, grant P17-01.

\appendices
\section{Approximating a Diffeomorphism}\label{appendix:diffeo_proof}
For achieving stable motions, CONDOR optimizes $\ell_{\text{stable}}$. This loss is designed to enforce the conditions of Theo. \ref{theo:stable} in the NN employed to represent the dynamical system $f_{\theta}^{\mathcal{T}}$. In this section, we show that as a consequence of this, $\psi_{\theta}$ approximates a diffeomorphism when $\mathcal{T}$ is a Euclidean space.

\begin{definition}[Diffeomorphism]
A mapping between two manifolds $\psi_{\theta}:\mathcal{T} \to \mathcal{L}$ is called a diffeomorphism if it is differentiable and bijective. 
\end{definition}

In general, we can consider Neural Networks to be differentiable, since most of the employed activation functions are differentiable. However, there are some exceptions to this, as it is for the case of the ReLU activation \cite{nair2010rectified}. In practice, these exceptions are non-differentiable at a small number of points and they have right and left derivatives, so they do not present many issues when computing their gradients. However, strictly speaking, for such cases, our method would make $\psi_{\theta}$ converge to a \emph{homeomorphism} instead. Differently to a diffeomorphism, a homeomorphism only requires the mapping to be continuous, but not differentiable. Nevertheless, for simplicity, we will assume that $\psi_{\theta}$ is differentiable.

Then, we need to study if $\psi_{\theta}$ converges to a bijective function to conclude that it approximates a diffeomorphism.
\begin{definition}[Bijective function]
A function is bijective if it is injective and surjective.
\end{definition}
\begin{definition}[Injective function]
 A function is injective if every distinct element of its domain maps to a distinct element, i.e.,  $\psi_{\theta}$ is injective if $\psi_{\theta}(x_{a})=\psi_{\theta}(x_{b})\Rightarrow x_{a}=x_{b}, \forall x \in \mathcal{T}$.
\end{definition}
\begin{definition}[Surjective function]\label{def:surjective}
 A function is surjective if every element of the function's codomain is the image of at least one element of its domain, i.e.,  $\forall y \in \mathcal{L}, \exists x \in \mathcal{T}\text{ such that } y=\psi_{\theta}(x)$.
\end{definition}

 From these definitions, it is clear that the surjectivity of $\psi_{\theta}$ is straightforward to show, since it depends on how its codomain is defined. In this work, we define the codomain of $\psi_{\theta}$ as $\mathcal{L}$, which is the manifold resulting from the image of $\psi_{\theta}$. In other words, the codomain $\mathcal{L}$ is a set that only contains the outputs of $\psi_{\theta}$ produced from $\mathcal{T}$. In such cases, a function is surjective, since its codomain and image are equal, which ensures that $\forall y \in \mathcal{L}, \exists x \in \mathcal{T}\text{ such that } y=\psi_{\theta}(x)$ (Definition \ref{def:surjective}).

 Consequently, it only remains to prove that if the conditions of Theo. \ref{theo:stable} are met, then $\psi_{\theta}$ is injective when $\mathcal{T}$ is a Euclidean space. 

\begin{proposition}\label{prop:injec}
If the conditions of Theo. \ref{theo:stable} are met and the domain of $\psi_{\theta}$ is $\mathcal{T}$; then, $\psi_{\theta}$ is injective.
\end{proposition}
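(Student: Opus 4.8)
The plan is to argue by contradiction and to transport the stability of $f^{\mathcal{L}}$ through $\psi_\theta$ using the two conditions of Theorem~\ref{theo:stable}. Suppose $\psi_\theta$ is not injective: there exist $x_a \neq x_b$ in $\mathcal{T}$ with $\psi_\theta(x_a) = \psi_\theta(x_b)$. Let $x_t^a, x_t^b$ be the task-space trajectories of $f_\theta^{\mathcal{T}}$ started at $x_a, x_b$, and let $y_t^a = \psi_\theta(x_t^a)$, $y_t^b = \psi_\theta(x_t^b)$ be their latent images. The goal is to show that these data force $x_a = x_b$.

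First I would use condition~1 to collapse the two latent trajectories onto one. Since $f_\theta^{\mathcal{T}\to\mathcal{L}}(x_t) = \partial \psi_\theta(x_t)/\partial t = f^{\mathcal{L}}(\psi_\theta(x_t))$, the curves $t \mapsto y_t^a$ and $t \mapsto y_t^b$ are both integral curves of $\dot{y} = f^{\mathcal{L}}(y)$ sharing the initial value $y_0^a = y_0^b = \psi_\theta(x_a)$. As $f^{\mathcal{L}}$ is Lipschitz continuous (it is the contraction of \eqref{eq:DSy}), Picard--Lindel\"of gives a unique solution, hence $y_t^a = y_t^b$ for all $t \geq 0$. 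Global asymptotic stability of $f^{\mathcal{L}}$ then sends this common curve to $y_g$, and, since the hypotheses of Theorem~\ref{theo:stable} are assumed, the corresponding task trajectories satisfy $x_t^a \to x_g$ and $x_t^b \to x_g$. Equivalently, condition~1 yields the flow conjugacy $\psi_\theta \circ \Phi_t^{\mathcal{T}} = \Phi_t^{\mathcal{L}} \circ \psi_\theta$, where $\Phi_t^{\mathcal{T}}, \Phi_t^{\mathcal{L}}$ are the flows of $f_\theta^{\mathcal{T}}$ and $f^{\mathcal{L}}$; because the linear flow $\Phi_t^{\mathcal{L}}$ is a bijection of $\mathcal{L}$, the identity $\psi_\theta(x_a) = \psi_\theta(x_b)$ propagates to $\psi_\theta(\Phi_t^{\mathcal{T}}(x_a)) = \psi_\theta(\Phi_t^{\mathcal{T}}(x_b))$ for every $t$.

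The decisive and, I expect, hardest step is to convert this into $x_a = x_b$. My plan is to push $t \to \infty$, so that both $\Phi_t^{\mathcal{T}}(x_a)$ and $\Phi_t^{\mathcal{T}}(x_b)$ enter an arbitrarily small neighborhood of $x_g$, and then to argue that $\psi_\theta$ is injective on such a neighborhood; invertibility in time of the continuously differentiable flow $\Phi_t^{\mathcal{T}}$ would then propagate this coincidence back to $t = 0$ and give the contradiction. The obstacle is that condition~2 only guarantees uniqueness of the preimage of the single point $y_g$, whereas local injectivity of $\psi_\theta$ near $x_g$ does not follow from it: for instance, the one-dimensional map $\psi_\theta(x) = x^2$ with $f_\theta^{\mathcal{T}}(x) = -\tfrac{\alpha}{2}x$ and $f^{\mathcal{L}}(y) = -\alpha y$ satisfies both conditions of Theorem~\ref{theo:stable} yet folds $\mathcal{T}$ two-to-one. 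I would therefore close the argument by supplementing the hypotheses with non-degeneracy of the Jacobian of $\psi_\theta$ at $x_g$ (so that the inverse function theorem furnishes a neighborhood on which $\psi_\theta$ is a local diffeomorphism), and I would flag this regularity requirement explicitly, since condition~2 alone controls only the fiber over $y_g$ and is not by itself enough to rule out such folding. The remaining bookkeeping---uniqueness of ODE solutions forward and backward in time on the positively invariant set $\mathcal{T}$ of Section~\ref{sec:boundaries}---is then routine.
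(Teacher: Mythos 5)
There is a genuine gap in your proposal, and it is precisely at the step you flag as the ``decisive and hardest'' one. You overlooked a structural hypothesis that is part of Theorem \ref{theo:stable}'s setting (the systems are ``the dynamical systems introduced in Section \ref{sec:condor_structure}''): the task-space field factors through the latent state, $f^{\mathcal{T}}_{\theta} = \phi_{\theta} \circ \psi_{\theta}$, i.e.\ $\dot{x} = \phi_{\theta}(y)$. This is exactly what the paper uses to close the argument, with no local-injectivity or inverse-function-theorem machinery at all: once condition 1 (plus uniqueness of solutions of the Lipschitz system $f^{\mathcal{L}}$, as you correctly argued) forces $y^{a}_{t} = y^{b}_{t}$ for all $t$, the two task-space trajectories have \emph{identical velocities} $\phi_{\theta}(y_{t})$ at every instant, hence their difference is constant: $x_{a}(t) - x_{b}(t) = x_{a_{0}} - x_{b_{0}}$ for all $t$. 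Both latent images converge to $y_{g}$, so by condition 2 the two limiting task states must both equal $x_{g}$ (equivalently, one can invoke the conclusion of Theorem \ref{theo:stable}, whose hypotheses hold, to get $x_{a}(t) \to x_{g}$ and $x_{b}(t) \to x_{g}$); this contradicts the constant nonzero separation $\|x_{a_{0}} - x_{b_{0}}\|$. So the fiber-over-$y_{g}$ control of condition 2 \emph{is} enough, once combined with the factorization.

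The same oversight invalidates your counterexample: with $\psi_{\theta}(x) = x^{2}$, any admissible task field must have the form $\phi_{\theta}(\psi_{\theta}(x)) = \phi_{\theta}(x^{2})$, which is an even function of $x$, whereas your $f^{\mathcal{T}}_{\theta}(x) = -\tfrac{\alpha}{2}x$ is odd; no $\phi_{\theta}$ realizes it, so this system lies outside CONDOR's architecture and outside the hypotheses of the proposition. Your example is actually a nice demonstration that conditions 1 and 2 alone, for an arbitrary pair $(f^{\mathcal{T}}, \psi)$ not linked by a map $\phi$, do permit folding --- i.e.\ that the compositional structure is essential --- but it does not show that the proposition as stated needs a Jacobian non-degeneracy hypothesis. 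It should instead have prompted a re-reading of the setup: when a clean counterexample to a stated result appears, the first suspect is a hypothesis one has dropped.
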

\begin{proof}
By contradiction, let us assume that these conditions are met and $\psi_{\theta}$ is not injective. Then, let us take two elements of $\mathcal{T}$, $x_{a_{0}}$ and $x_{b_{0}}$, where $x_{a_{0}} \neq x_{b_{0}}$. If $\psi_{\theta}$ is not injective, there $\exists x_{a_{0}}$ and $\exists x_{b_{0}}$, such that $\psi_{\theta}(x_{a_{0}})=\psi_{\theta}(x_{b_{0}})$. In such cases, from Condition 1) of Theo. \ref{theo:stable}, we know that as $t\to \infty$, the mappings of these elements will generate the same trajectory in $\mathcal{L}$ following the dynamical system $f^{\mathcal{L}}$, which converges to $y_{g}$. 

Since the evolution of the variables $x_{a_{0}}$ and $x_{b_{0}}$ is completely defined by the evolution of their mappings in $\mathcal{L}$ (i.e., $\dot{x}=\phi(y)$), both variables will present the same time derivative in $\mathcal{T}$. Consequently, given that the trajectories obtained when starting from $x_{a_{0}}$ and $x_{b_{0}}$, at time $t$, are described by $x_{i}(t)=x_{i_{0}} + \int_{\tau=0}^{t}f(x(\tau))d\tau$\footnote{In discrete time, the integral transforms into a summation.}, where $i \in \{a, b\}$, their integral part will be the same $\forall t$. Then, $\forall t$ the distance between both trajectories is $d(x_{a_{0}}, x_{b_{0}})=||x_{a}(t) - x_{b}(t)||=||x_{a_{0}} - x_{b_{0}}||$, where $d(\cdot,\cdot)$ is a distance function. 

Thus, as $t\to \infty$, $x_{a}$ and $x_{b}$ will converge to two different points $x_{a_{g}}$ and $x_{b_{g}}$, respectively. However, we also stated that their respective mappings converge to $y_{g}$, i.e., $\psi_{\theta}(x_{a_{g}})=\psi_{\theta}(x_{b_{g}})=y_{g}$. In this case, Condition 2) of Theo. \ref{theo:stable} implies that $x_{a_{g}}=x_{b_{g}}=x_{g}$. This contradicts the fact that $x^{a}_{g}\neq x^{b}_{g}$. Consequently, $\psi_{\theta}$ is injective.
\end{proof}

Finally, from Prop. \ref{prop:injec} we can conclude that if the conditions of Theo. \ref{theo:stable} are enforced in $f^{\mathcal{T}}_{\theta}$; then, $\psi_{\theta}$ will approximate a diffeomorphism. 

\section{Stability of $f^{\mathcal{L}}$ with adaptive gains}\label{appendix:stability_adaptive}
In this section, we show that $y_{g}$ is globally asymptotically stable in the system introduced in \eqref{eq:DSy} when the adaptive gain $\alpha(y_{t})$ is greater than zero. Note that the derivation introduced here is analogous to the one of the discrete-time case when the system is simulated using the forward Euler integration method. However, in the latter, the condition for global asymptotic stability is $0 < \alpha(y_{t}) < 2 / \Delta t$. 

To show global asymptotic stability, we introduce the Lyapunov candidate $V(y_{t})=y_{t}^{\top}y_{t}$ and study if the condition $\dot{V}(y_{t}) < 0$ holds for all $y_{t} \in \mathbb{R}^{n}$. By introducing $A=\text{diag}(-\alpha(y_{t}))$ and, without loss of generality, setting $y_{g}=0$, we write \eqref{eq:DSy} as $\dot{y}_{t}=Ay_{t}$. Then,
\begin{equation}
\begin{split}
\dot{V}(y_{t}) &= (Ay_{t})^{\top}y_{t} + y_{t}^{\top}(Ay_{t}) \\
 &= y_{t}^{\top}(A + A^{\top})y_{t} \\
 &= 2 y_{t}^{\top}Ay_{t} \hspace{0.5cm}\text{(since $A$ diagonal)}.
\end{split}
\end{equation}
Therefore, it follows that this function is negative when the eigenvalues of $A$ are negative. Since $A$ is a diagonal matrix, the eigenvalues correspond to its diagonal $-\alpha(y_{t})$. Consequently, $y_{g}$ is globally asymptotically stable in the system \eqref{eq:DSy} when $\alpha(y_{t}) > 0$.

\section{Neural Network Architecture}\label{appendix:neural_network}
In this appendix, we provide details regarding the Neural Network's architecture. The criteria employed to design the architecture were to build a network: 1) large enough for it to be very flexible in terms of the motions that it can represent, and 2) with a reasonable size such that it can do inference in real time. Consequently, we observed that 3 feedforward fully connected layers, with 300 neurons each, for $\psi_{\theta}$ and $\phi_{\theta}$, i.e., 6 layers in total, were enough for obtaining accurate results and low inference times. The employed activation function was GELU \cite{hendrycks2016gaussian} for every layer except for the last layer of the network, which had a linear activation, and for the last layer of $\alpha$, which had a sigmoid function. In our case, the network inferred at $\pmb{677 \pm 57}$ \textbf{Hz} (confidence interval with one standard deviation) using a laptop PC with an Intel i7-8750H (12) @ 4.100GHz CPU and an NVIDIA GeForce RTX 2070 Mobile GPU. PyTorch had the GPU enabled at inference time. 

For the case of the adaptive gains $\alpha(y^{\mathcal{T}}_{t})$, two layers were employed instead. Note that these layers only affect the training time of the network, given that they are not required for inference. 

Finally, layer normalization \cite{ba2016layer} was added after each layer of the network except for the last layers of $\psi_{\theta}$, $\phi_{\theta}$ and $\alpha$. This type of normalization has shown to be beneficial for reducing training times and also for helping with vanishing gradients.

\begin{table*}[t]
\scriptsize
\caption{Hyperparameter optimization results of CONDOR.}
\label{tab:hyper_tuning_condor}
\begin{center}
\begin{tabular}{lclccccccccc}
\hline
\textbf{Hyperparameter}                                  & \textbf{Opt.?}                &  & \multicolumn{4}{c}{\textbf{\begin{tabular}[c]{@{}c@{}}Hand-tuned value / \\ initial opt. guess\end{tabular}}}                                                                                                                                                      & \textbf{}                     & \multicolumn{4}{c}{\textbf{Optimized value}}                                                                                                                                                                                                                       \\ \cline{4-7} \cline{9-12} 
\textbf{}                                                & \multicolumn{1}{l}{\textbf{}} &  & \multicolumn{1}{l}{\textbf{CONDOR}} & \textbf{\begin{tabular}[c]{@{}c@{}}CONDOR\\ (fixed gains)\end{tabular}} & \textbf{\begin{tabular}[c]{@{}c@{}}CONDOR\\ (triplet)\end{tabular}} & \textbf{\begin{tabular}[c]{@{}c@{}}CONDOR\\ (1st order\\ LAIR)\end{tabular}} & \multicolumn{1}{l}{\textbf{}} & \multicolumn{1}{l}{\textbf{CONDOR$^{***}$}} & \textbf{\begin{tabular}[c]{@{}c@{}}CONDOR\\ (fixed gains)\end{tabular}} & \textbf{\begin{tabular}[c]{@{}c@{}}CONDOR\\ (triplet)\end{tabular}} & \textbf{\begin{tabular}[c]{@{}c@{}}CONDOR\\ (1st order\\ LAIR)\end{tabular}} \\
\textbf{CONDOR}                                          & \multicolumn{1}{l}{}          &  & \multicolumn{1}{l}{}                & \multicolumn{1}{l}{}                                                    & \multicolumn{1}{l}{}                                                & \multicolumn{1}{l}{}                                                         & \multicolumn{1}{l}{}          & \multicolumn{1}{l}{}                & \multicolumn{1}{l}{}                                                    & \multicolumn{1}{l}{}                                                & \multicolumn{1}{l}{}                                                         \\ \hline
Max adap. latent gain ($\alpha_{\text{max}}$)            & \cmark                        &  & 1e-2                                & 8e-3$^{*}$                                                              & 1e-2                                                                & 9.997e-2                                                                     &                               & 9.997e-2                            & 2.470e-3$^{*}$                                                          & 3.970e-2                                                            & 0.174                                                                        \\
Stability loss weight ($\lambda$)                        & \cmark                        &  & 1                                   & 1                                                                       & 1                                                                   & 9.300e-2                                                                     &                               & 9.300e-2                            & 3.481                                                                   & 0.280                                                               & 2.633                                                                        \\
Window size imitation ($H^{i}$)                          & \cmark                        &  & 14                                  & 14                                                                      & 14                                                                  & 14                                                                           &                               & 14                                  & 14                                                                      & 14                                                                  & 1                                                                            \\
Window size stability ($H^{s}$)                          & \cmark                        &  & 4                                   & 4                                                                       & 2                                                                   & 4                                                                            &                               & 1                                   & 1                                                                       & 2                                                                   & 8                                                                            \\
Contrastive margin ($m$)                                 & \cmark                        &  & 1e-2                                & 1e-2                                                                    & 1e-4$^{**}$                                                         & 3.334e-2                                                                     &                               & 3.334e-2                            & 3.215e-3                                                                & 1.977e-4$^{**}$                                                     & 1.557e-2                                                                     \\
Batch size imitation ($B^{i}$)                           & \xmark                        &  & 250                                 & 250                                                                     & 250                                                                 & 250                                                                          &                               & -                                   & -                                                                       & -                                                                   & -                                                                            \\
Batch size stability ($B^{s}$)                           & \xmark                        &  & 250                                 & 250                                                                     & 250                                                                 & 250                                                                          &                               & -                                   & -                                                                       & -                                                                   & -                                                                            \\
                                                         & \multicolumn{1}{l}{}          &  & \multicolumn{1}{l}{}                & \multicolumn{1}{l}{}                                                    & \multicolumn{1}{l}{}                                                & \multicolumn{1}{l}{}                                                         & \multicolumn{1}{l}{}          & \multicolumn{1}{l}{}                & \multicolumn{1}{l}{}                                                    & \multicolumn{1}{l}{}                                                & \multicolumn{1}{l}{}                                                         \\
\textbf{Neural Network}                                  & \multicolumn{1}{l}{}          &  & \multicolumn{1}{l}{}                & \multicolumn{1}{l}{}                                                    & \multicolumn{1}{l}{}                                                & \multicolumn{1}{l}{}                                                         & \multicolumn{1}{l}{}          & \multicolumn{1}{l}{}                & \multicolumn{1}{l}{}                                                    & \multicolumn{1}{l}{}                                                & \multicolumn{1}{l}{}                                                         \\ \hline
Optimizer                                                & \xmark                        &  & AdamW                               & AdamW                                                                   & AdamW                                                               & AdamW                                                                        &                               & -                                   & -                                                                       & -                                                                   & -                                                                            \\
Number of iterations                                     & \xmark                        &  & 40000                               & 40000                                                                   & 40000                                                               & 40000                                                                        &                               & -                                   & -                                                                       & -                                                                   & -                                                                            \\
Learning rate                                            & \cmark                        &  & 1e-4                                & 1e-4                                                                    & 1e-4                                                                & 1e-4                                                                         &                               & 4.855e-4                              & 4.295e-4                                                                    & 8.057e-4                                                                & 5.553e-5                                                                       \\
Weight decay                                             & \xmark                        &  & 1e-4                                & 1e-4                                                                    & 1e-4                                                                & 1e-4                                                                            &                               & -                                   & -                                                                       & -                                                                   & -                                                                            \\
Activation function                                      & \xmark                        &  & GELU                                & GELU                                                                    & GELU                                                                & GELU                                                                         &                               & -                                   & -                                                                       & -                                                                   & -                                                                            \\
Num. layers ($\psi_{\theta}$, $\phi_{\theta}$, $\alpha$) & \xmark                        &  & (3, 3, 3)                           & (3, 3, -)                                                               & (3, 3, 3)                                                           & (3, 3, 3)                                                                    &                               & -                                   & -                                                                       & -                                                                   & -                                                                            \\
Neurons/hidden layer                                     & \xmark                        &  & 300                                 & 300                                                                     & 300                                                                 & 300                                                                          &                               & -                                   & -                                                                       & -                                                                   & -                                                                            \\
Layer normalization                                      & \xmark                        &  & yes                                 & yes                                                                     & yes                                                                 & yes                                                                          &                               & -                                   & -                                                                       & -                                                                   & -                                                                            \\ \hline
\end{tabular}
\end{center}
\begin{tablenotes}
\footnotesize
\item $^{*}$ Corresponds to the optimized fixed gain.
\item $^{**}$ Corresponds to the triplet loss margin.
\item $^{***}$ Also applies for CONDOR (2nd order).
\end{tablenotes}
\end{table*}
\section{Hyperparameter optimization} \label{appendix:hyerparameter_optimization}
We introduce a hyperparameter optimization strategy for CONDOR's different variations on the LASA and LAIR datasets. We define an accuracy metric, $\mathcal{L}_{\text{acc}}$, calculated using the distance metrics from Section \ref{sec:accuracy_description}. We also evaluate the stability of the system by minimizing the diffeomorphism mismatch, i.e., the RMSE between $y^{\mathcal{L}}_{1:\text{N}}$ and $y^{\mathcal{T}}_{1:\text{N}}$, defining the stability term, $\mathcal{L}_{\text{stable}}$. Lastly, we account for the precision of the learned system's goal versus the real goal by measuring the average distance of all final trajectory points to the goal, creating the term $\mathcal{L}_{\text{goal}}$. Then, we define the following objective:
\begin{equation}
    \mathcal{L}_{\text{hyper}} = \mathcal{L}_{\text{acc}} + \gamma_{\text{stable}} \mathcal{L}_{\text{stable}} + \gamma_{\text{goal}} \mathcal{L}_{\text{goal}},
\end{equation}
where $\gamma_{\text{stable}}$ and $\gamma_{\text{goal}}$ are weighting factors. After initial tests, we settled on $\gamma_{\text{stable}}=0.48$ and $\gamma_{\text{goal}}=3.5$.

In practical applications, hyperparameter tuning has limitations like time consumption and susceptibility to the curse of dimensionality. To mitigate this, we focused on five strategies: reducing the objective function's overhead, limiting the evaluation set, employing Bayesian optimization, pruning, and selecting a subset of hyperparameters.

\subsubsection{Reduced objective function's overhead}
The objective function minimized in the hyperparameter optimization process is periodically computed throughout each learning process. Consequently, if this function is expensive to compute, it will make the optimization process slower. More specifically, we observe that the computation of the accuracy using the DTWD and FD metrics adds considerable overhead to the computation time of the objective function. Furthermore, we also observe that the values of the RMSE, DTWD, and FD are highly correlated. Therefore, since computing the RMSE is much faster than computing the other metrics, the hyperparameter optimization loss that accounts for accuracy only consists of the RMSE, i.e., $\ell_{\text{IL}}$ with $H^{i}=n$ and $t'=0$.

\subsubsection{Reduced evaluation set}
Optimizing hyperparameters for the LASA/LAIR dataset using different motions simultaneously is challenging since the objective computed from different motions is not comparable. Instead, we focused on optimizing using a single, \emph{difficult} motion, assuming robust hyperparameters for it would perform well overall. We selected the \emph{heee} motion from the LASA dataset for first-order motions and the \emph{capricorn} motion from the LAIR dataset for second-order motions. These motions, with complex features like large curvatures or sharp edges, represented challenging test cases.

\subsubsection{Bayesian optimization}
During optimization, every evaluation of a different set of hyperparameters is costly. Therefore, instead of randomly selecting the hyperparameters to evaluate at each run or following a grid search approach, we select the most promising set given the ones evaluated so far. To achieve this, we employ the Tree Parzen Estimator (TPE) \cite{bergstra2011algorithms}, which builds a probability model of the objective function and uses it to select the next set of hyperparameters based on how promising they are. We use the implementation available in the Optuna API \cite{akiba2019optuna}.

\subsubsection{Pruning}
Throughout the optimization process, it is possible to detect inauspicious runs after a few evaluations. Hence, these trials can be \emph{pruned} before the training process ends, freeing the computational resources for a new run to be executed. We also incorporate this feature in the optimization process using the pruning method available in the Optuna API.

\subsubsection{Select a subset of hyperparameters}
Finally, before starting the optimization process, by interacting with CONDOR, we identified a subset of the hyperparameters that showed to have the largest influence over its results. Therefore, to reduce the dimensionality of the search problem, only this subset of hyperparameters is optimized. The rest are manually tuned based on our interactions with the framework. 

\subsection{Results}
Table \ref{tab:hyper_tuning_condor} details the results of the hyperparameters optimization process, including the optimized parameters, and their pre- and post-optimization values. It is divided into two sections: hyperparameters specific to CONDOR, and those general to DNNs. Note that most optimized hyperparameters pertain to the CONDOR method. For the LAIR dataset, we used LASA's optimized hyperparameters as a starting point, resulting in no improved set found for the second-order CONDOR method. Hyperparameters used in BC are excluded as those applicable were identical to those of CONDOR.

Note that the hyperparameter $\alpha_{\text{max}} \in (0,1]$ has not been introduced yet. This hyperparameter limits the maximum value of the adaptive gain $\alpha$ in $f^{\mathcal{L}}$ (see Section \ref{sec:adaptive_gains}). Hence, if in this work we define $\Delta t=1$ for $\alpha$ in $f^{\mathcal{L}}$; then, even though its maximum allowed value is 1, we limit it even further using $\alpha_{\text{max}}$. This process improves CONDOR's performance, as observed in preliminary experiments. Hence, we define $\alpha=\alpha_{\max}\bar{\alpha}(y_{t}^{\mathcal{T}})$, with $\bar{\alpha}$ as the DNN output using the sigmoid activation function, ensuring $\alpha \in (0, \alpha_{\max})$.

\section{Real-World Experiments: low-level control}\label{sec:low_level_control}
Regarding the low-level control strategy employed in this work, we focus on fully actuated rigid body dynamics systems, which evolve according to the following equation of motion:
\begin{equation}
    M(q)\ddot{q} + C(q,\dot{q})\dot{q} + G(q) + D(q)\dot{q} = u,
\end{equation}
where $q$ is the joint angle vector, $M(q)$ is the inertia matrix, $C(q, \dot{q})$ is the Coriolis/centripetal vector, $G(q)$ is the gravity vector, $D(q)$ is the viscous friction matrix and $u$ is the actuation torque vector \cite{Siciliano2009}. 

The objective of the low-level controller is to, at every time step, map the desired state $x^{d}$ and state derivative $\dot{x}^{d}$ provided by CONDOR to $u$, such that the system is driven towards the desired state. In our experiments, we learn motions in task space and in joint space. Hence, we employ slightly different strategies for each case.

The control frequency of the low-level controller is $\pmb{500}$ \textbf{Hz} in every experiment.

\subsection{Joint space control}\label{sec:joint_space_control}
In this work, independently of the task that CONDOR controls, every motion reference is eventually mapped to joint space. Hence, this subsection explains our approach to track this reference in joint space $(q^{d}, \dot{q}^{d})$.  We achieve this by means of a proportional-derivative (PD) controller with gravity compensation, i.e.,
\begin{equation}
    u = \alpha(q^{d} - q) + \beta(\dot{q}^{d}-\dot{q}) + G(q),
\label{eq:PD_control}
\end{equation}
where $\alpha$ and $\beta$ are gain matrices. The higher the gains of this controller, the smaller the tracking error \cite{kawamura1988local,della2021model}. Moreover, an interesting property of this approach is that as $q^{d}$ approaches $q_{g}$, where $q_{g}$ corresponds to the mapping from $x_{g}$ to the configuration space of the robot, CONDOR makes $\dot{x}^{d}$, and in consequence $\dot{q}^{d}$, tend to $0$. Then, \eqref{eq:PD_control} behaves similarly to
\begin{equation}
    u = \alpha(q^{d} - q) - \beta\dot{q} + G(q).
\label{eq:PD_control_reduced}
\end{equation}
This control law ensures global asymptotic stability at the equilibrium $q^{d}$ for any choice of $\alpha$ and $\beta$ as long as these are positive definite matrices \cite{Siciliano2009}.

\subsection{Task space control: Online}\label{sec:task_space_control_online}
To control the robot when references $\dot{x}^{d}$ are given online (see Fig. \ref{fig:online_control}) in task space, we use the real-time Inverse Kinematics (IK) library TRACK-IK \cite{beeson2015trac}. To do so, we integrate the velocity reference using the forward Euler integrator to obtain $x^{d}$, and we map this position to joint space using this library to obtain $q^{d}$. We apply exponential smoothing to these results to alleviate vibrations and stuttering issues. Finally, we compute the desired velocity $\dot{q}^{d}$ using the forward difference of $q$, i.e., $\dot{q}^d=(q^{d}-q)/\Delta t$, where $\Delta t$ is the time step length of CONDOR. Then, at every time step, the values $q^{d}, \dot{q}^{d}$ are provided to the controller described in Appendix \ref{sec:joint_space_control}.

\subsection{Task space control: Offline}\label{sec:task_space_control_offline}
In the offline case (see Fig. \ref{fig:offline_control}), a trajectory in task space $(x^{d}_{0}, x^{d}_{1},...,x^{d}_{N})$ is first computed with CONDOR. This trajectory is fed to the low-level controller to execute it offline. To achieve this, firstly, we map the trajectory to joint space using the Levenberg-Marquadt IK solver of the Robotics Toolbox \cite{corke2021rtb}. In this case, we employ this solver instead of TRACK-IK, because it is robust around singularities and can avoid problems like stuttering \cite{love2020levenberg}. This is important for obtaining very smooth solutions in scenarios where this is critical, such as in writing tasks. Note that the solver does not run in real time; however, this is not problematic, since the IK solutions are computed offline.

Afterward, the resulting joint space reference trajectory is approximated with a spline \cite{dierckx1995curve} that evolves as a function of time. Finally, when the motion starts, the time is incremented by $\Delta t$ at each time step and used to query the reference value that is sent to the controller described in Appendix \ref{sec:joint_space_control}.

\begin{figure}[t]
    \centering
    \includegraphics[width=\linewidth]{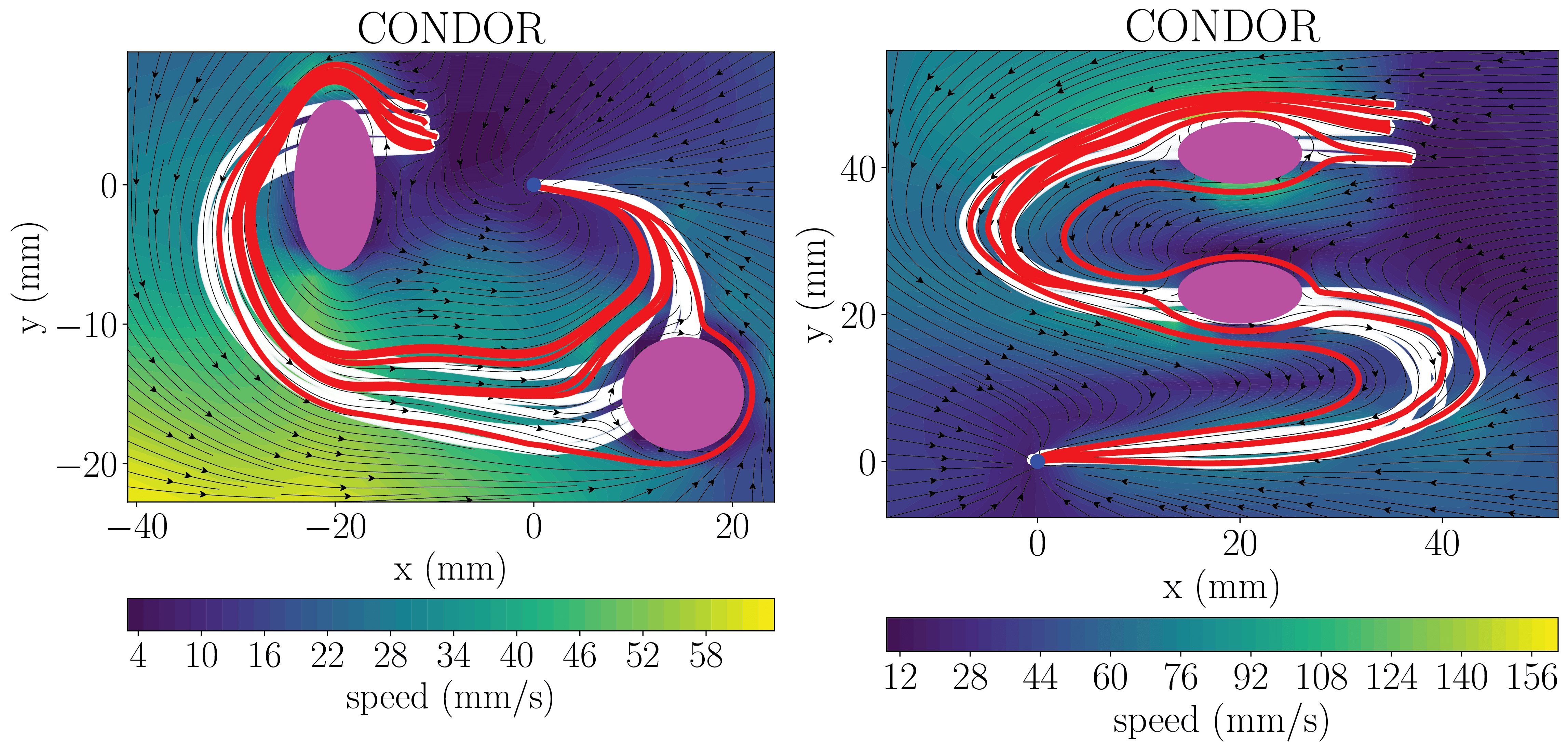}
    \caption{Obstacle avoidance in the LASA dataset.}
    \label{fig:LASAobstacle}
\end{figure}

\section{Obstacle avoidance}\label{sec:obstacle_avoidance}
Obstacle avoidance for motions modeled as dynamical systems is a problem that has been addressed in the literature \cite{khansarizadeh2012avoidance,severiano2013avoidance,huber2019avoidance}. Any of these methods can be combined with our proposed framework. In this work, we implemented the method presented in \cite{khansarizadeh2012avoidance} in PyTorch, and combined it with CONDOR. We compute a modulation matrix $M(x)$ that, when multiplied with the learned dynamical system $f(x)$, modifies the motion such that a new dynamical system $\bar{f}(x)=M(x)f(x)$ is obtained. $\bar{f}(x)$ avoids obstacles while maintaining the stability properties of $f(x)$. For more details please refer to \cite{khansarizadeh2012avoidance} (obstacle avoidance of multiple convex obstacles).

Fig. \ref{fig:LASAobstacle} shows motions from the LASA dataset where we test this approach. We observe that the motions generated with CONDOR remain stable after applying the modulation matrix. Furthermore, the obstacles are successfully avoided, showing that, as expected, the dynamical system motion formulation of CONDOR can be effectively combined with methods designed to work with dynamical systems. 

Finally, this method was tested with 3D obstacles in a real 7DoF robot manipulator when controlling its end effector position. These results are provided in the attached video.

\bibliographystyle{IEEEtran}
\bibliography{IEEE_bibliography}



\begin{IEEEbiography}[{\includegraphics[width=1in,height=1.25in,clip,keepaspectratio]{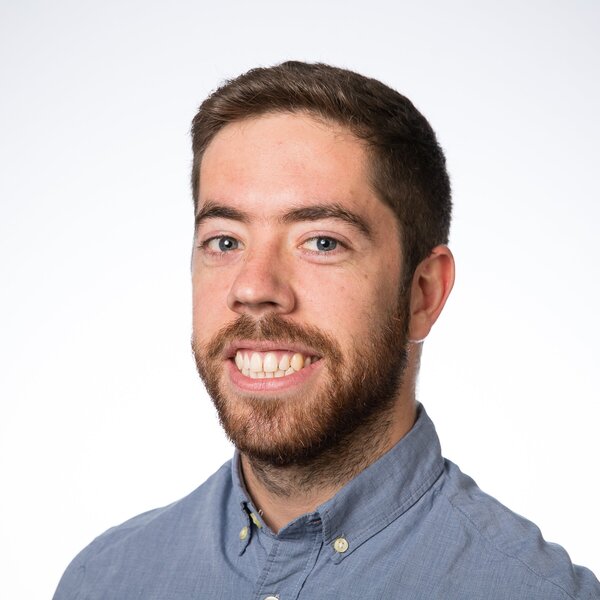}}]{Rodrigo Pérez-Dattari}
received the B.Sc. and M.Sc. degrees in Electrical Engineering at the University of Chile, Santiago, Chile, in 2017 and 2019, respectively. During this time, he was a member of the UChile Robotics Team, where he participated in the RoboCup competition in the years 2016-2018. He is currently pursuing a Ph.D. degree in robotics at the Delft University of Technology. He is part of the FlexCRAFT project, a Dutch research program to advance robotics for flexible agro-food technology. His research interests include imitation learning, interactive learning, and machine learning for control.
\end{IEEEbiography}

\begin{IEEEbiography}[{\includegraphics[width=1in,height=1.25in,clip,keepaspectratio]{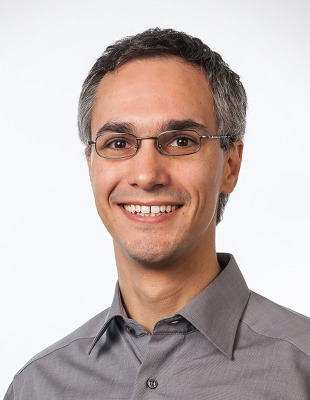}}]{Jens Kober}
is an associate professor at the TU Delft, Netherlands. He worked as a postdoctoral scholar jointly at the CoR-Lab, Bielefeld University, Germany and at the Honda Research Institute Europe, Germany. He graduated in 2012 with a PhD Degree in Engineering from TU Darmstadt and the MPI for Intelligent Systems. For his research he received the annually awarded Georges Giralt PhD Award for the best PhD thesis in robotics in Europe, the 2018 IEEE RAS Early Academic Career Award, the 2022 RSS Early Career Award, and has received an ERC Starting grant. His research interests include motor skill learning, (deep) reinforcement learning, imitation learning, interactive learning, and machine learning for control.
\end{IEEEbiography}

\end{document}